\def\pt{\phantom{0}}
\definecolor{grassgreen}{rgb}{0.1,0.8,0.2}
\definecolor{lightgray}{rgb}{0.85, 0.85, 0.85}
\newcommand{\squishlist}{
 \begin{list}{$\bullet$}
  { \setlength{\itemsep}{0pt}
     \setlength{\parsep}{0.3pt}
     \setlength{\topsep}{0.3pt}
     \setlength{\partopsep}{0pt}
     \setlength{\leftmargin}{0.5em}
     \setlength{\labelwidth}{0.5em}
     \setlength{\labelsep}{0.3em} } }
\newcommand{\squishend}{
  \end{list}  }
\newtheorem{theorem}{Theorem}[section]
\newtheorem{lemma}[theorem]{Lemma}
\newenvironment{proof}{{\noindent\it Proof.}\quad}{\hfill $\square$\par}
\crefname{section}{Sec.}{Secs.}
\Crefname{section}{Section}{Sections}
\Crefname{table}{Table}{Tables}
\crefname{table}{Tab.}{Tabs.}
\begin{document}

\title{Three Guidelines You Should Know for Universally Slimmable Self-Supervised Learning}

\author{
 Yun-Hao Cao$^1$, \quad
 Peiqin Sun$^2$\thanks{Corresponding author.}, \quad
 Shuchang Zhou$^{2}$ \\\\
$^1$State Key Laboratory for Novel Software Technology, Nanjing University \\
$^2$MEGVII Technology \\
{\tt\small caoyh@lamda.nju.edu.cn}, {\tt\small \{sunpeiqin, zsc\}@megvii.com}
}

\maketitle


\begin{abstract}
We propose universally slimmable self-supervised learning (dubbed as US3L) to achieve better accuracy-efficiency trade-offs for deploying self-supervised models across different devices. We observe that direct adaptation of self-supervised learning (SSL) to universally slimmable networks misbehaves as the training process frequently collapses. We then discover that temporal consistent guidance is the key to the success of SSL for universally slimmable networks, and we propose three guidelines for the loss design to ensure this temporal consistency from a unified gradient perspective. Moreover, we propose dynamic sampling and group regularization strategies to simultaneously improve training efficiency and accuracy. Our US3L method has been empirically validated on both convolutional neural networks and vision transformers. With only once training and one copy of weights, our method outperforms various state-of-the-art methods (individually trained or not) on benchmarks including recognition, object detection and instance segmentation. Our code is available at \url{https://github.com/megvii-research/US3L-CVPR2023}.
\end{abstract}

\section{Introduction}
Deep supervised learning has achieved great success in the last decade, but the drawback is that it relies heavily on a large set of annotated training data. Self-supervised learning (SSL) has gained popularity because of its ability to avoid the cost of annotating large-scale datasets. Since the emergence of contrastive learning~\cite{simclr:hinton:ICML20}, SSL has clearly gained momentum and several recent works~\cite{mocov2:xinlei:arxiv2020,byol:grill:NIPS20} have achieved comparable or even better performance than the supervised pretraining when transferring to downstream tasks. However, it remains challenging to deploy trained models for edge computing purposes, due to the limited memory, computation and storage capabilities of such devices.

To facilitate deployment, several model compression techniques have been proposed, including lightweight architecture design~\cite{mobilenetv2:sabdker:CVPR18}, knowledge distillation~\cite{distillation:hinton:arxiv2015}, network pruning~\cite{deepcompression:han:ICLR16}, and quantization~\cite{dorefa:zhou:arxiv2016}. Among them, structured network pruning~\cite{thinet:luo:ICCV17} is directly supported and accelerated by most current hardware and therefore the most studied. However, most structured pruning methods require fine-tuning to obtain a sub-network with a specific sparsity, and a single trained model cannot achieve instant and adaptive accuracy-efficiency trade-offs across different devices. To address this problem in the context of supervised learning, the family of slimmable networks (S-Net) and universally slimmable networks (US-Net)~\cite{slimmable:yu:ICLR19, universal-slimmable:yu:ICCV19,once-for-all:hansong:ICLR20,dynamic-slimmable:Li:CVPR21} were proposed, which can switch freely among different widths by training only once. 

\begin{table}[t]
	\centering
	\caption{Comparisons between supervised classification and SimSiam under S-Net on CIFAR-100. The accuracy for SimSiam is under linear evaluation. `-' denotes the model collapses.}
	\label{tab:motivation}
	\setlength{\tabcolsep}{2.5pt}
	\renewcommand{\arraystretch}{0.75}
	\renewcommand{\multirowsetup}{\centering}
	\begin{tabular}{l|c|c|c|c|c}
	\hline
	\multirow{2}{*}{Type} & \multirow{2}{*}{Method} & \multicolumn{4}{c}{Accuracy (\%)} \\
	\cline{3-6}
	&&1.0x& 0.75x& 0.5x & 0.25x \\
	\hline
	\multirow{3}{*}{Supervised}& Individual & 73.8	& 72.8	&71.4&	67.3	 \\
	& S-Net~\cite{slimmable:yu:ICLR19} & 71.9&	71.7&	70.8	& 66.2	 \\
	& S-Net+Distill~\cite{universal-slimmable:yu:ICCV19} & 73.1&	71.9&	70.5&	67.2\\
	\hline
	\multirow{4}{*}{SimSiam~\cite{simsiam:kaiming:cvpr2021}}& Individual & 65.2	& 64.0&	60.6&	51.2	\\
	& S-Net~\cite{slimmable:yu:ICLR19} & - & - & - & - \\
	& S-Net+Distill~\cite{universal-slimmable:yu:ICCV19} & 46.9	& 46.9&	46.7&	45.3\\
	\cline{2-6}
	& Ours & \textbf{65.5} &\textbf{65.3} &\textbf{63.2} & \textbf{59.7}\\
	\hline
    \end{tabular}
\end{table}

Driven by the success of slimmable networks, a question arises: Can we train a \textit{self-supervised model} that can run at arbitrary width? A na\"ive solution is to replace the supervised loss with self-supervised loss based on the US-Net framework. However, we find that this solution doesn't work directly after empirical studies. Table~\ref{tab:motivation} shows that the phenomenon in self-supervised scenarios is very different. The model directly collapses after applying the popular SSL method SimSiam~\cite{simsiam:kaiming:cvpr2021} to slimmable networks~\cite{slimmable:yu:ICLR19}. Although using inplace distillation~\cite{universal-slimmable:yu:ICCV19} for sub-networks prevents the model from collapsing, there is still a big gap between the results of S-Net+Distill and training each model individually for SimSiam. So why is the situation so different in SSL and how to further improve the performance (i.e., close the gap)?

In this paper, we present a unified perspective to explain the differences and propose corresponding measures to bridge the gap. From a unified gradient perspective, we find that the key is that the guidance to sub-networks should be consistent between iterations, and we analyze which components of SSL incur the temporal inconsistency problem and why US-Net works in supervised learning. Based on these theoretical analyses, we propose three guidelines for the loss design of US-Net training to ensure temporal consistency. As long as one of them is satisfied, US-Net can work well, no matter in supervised or self-supervised scenarios. Moreover, considering the characteristics of SSL and the deficiencies of US-Net, we propose dynamic sampling and group regularization to reduce the training overhead while improving accuracy. Our main contributions are:
\squishlist
    \item We discover significant differences between supervised and self-supervised learning when training US-Net. Based on these observations, we analyze and summarize three guidelines for the loss design of US-Net to ensure temporal consistency from a unified gradient perspective. 
    \item We propose a dynamic sampling strategy to reduce the training cost without sacrificing accuracy, which eases coping with the large data volumes in SSL.
    \item We analyze how the training scheme of US-Net limits the model capacity and propose group regularization as a solution by giving different freedoms to different channels.
    \item We validate the effectiveness of our method on both CNNs and Vision Transformers (ViTs). Our method requires only once training and a single model, which can exceed the results of training each model individually, and is comparable to knowledge distillation from pretrained teachers.
\squishend

\section{Related Works}
\noindent\textbf{Self-supervised Learning.} To avoid time-consuming and expensive data annotations, many self-supervised methods were proposed to learn visual representations from large-scale unlabeled images or videos~\cite{jiasaw:mehdi:ECCV16,deepclustering:caron:ECCV18}. As the driving force of state-of-the-art SSL methods, contrastive learning methods greatly improve the performance of representation learning in recent years~\cite{InfoNCE:arxiv2018}. Contrastive learning is a discriminative approach that aims at pulling similar samples closer and pushing diverse samples far from each other. SimCLR~\cite{simclr:hinton:ICML20} and MoCo~\cite{moco:kaiming:CVPR20} both employ a contrastive loss function InfoNCE~\cite{InfoNCE:arxiv2018}, which requires negative samples. BYOL~\cite{byol:grill:NIPS20} and SimSiam~\cite{simsiam:kaiming:cvpr2021} discard negative sampling in contrastive learning by using an asymmetrical design. 

To improve the accuracy-efficiency trade-off for self-supervised models, many works have been proposed. Fang \textit{et al.}~\cite{seed:fang:ICLR21} proposed self-supervised knowledge distillation (SEED) for SSL with lightweight models. However, models at different widths (sparsities) must be trained individually, which incurs significant computational and storage overhead and is unsustainable for large data volumes. Moreover, it requires a pretrained teacher model while ours does not. Recently, SSQL~\cite{ssql:cao:ECCV2022} proposes to pretrain quantization-friendly self-supervised models to facilitate downstream deployment. Concurrent work DATA~\cite{DATA:chang:CVPR2022} proposes a neural architecture search (NAS) approach specialized for SSL. In contrast, we focus on structured pruning and we provide a unified theoretical explanation for the loss design of once-for-all training. 

\noindent\textbf{Slimmable Networks.} Slimmable networks~\cite{slimmable:yu:ICLR19} are widely studied because of their ability to execute at different widths, permitting instant and adaptive accuracy-efficiency trade-offs at runtime. Later, \cite{universal-slimmable:yu:ICCV19} proposes universally slimmable networks (US-Net), which extend slimmable networks to run at arbitrary width. Follow-up work OFA~\cite{once-for-all:hansong:ICLR20} extends the sampling space of sub-networks to depth and kernel size dimensions but it also inherits the loss design of US-Net by combining base loss and inplace distillation loss. \cite{ViTSlim:chavan:CVPR2022} explores finding an optimal sub-model from a vision transformer~\cite{vit:dosovitskiy:ICLR21}. They were all done, however, under the supervised learning paradigm, whereas our method is self-supervised.

\begin{figure*}[t]
	\centering
	\subfloat[General Framework]{
	\label{fig:network-a}
	\includegraphics[width=0.9\linewidth]{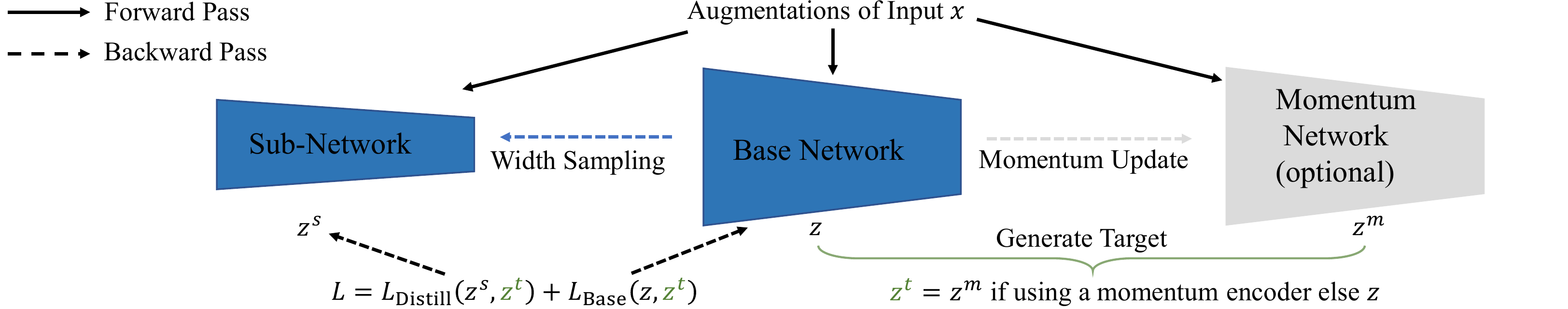} } \\ 
	\subfloat[Guidelines for Loss Design]{
	\label{fig:network-b}
	\includegraphics[width=0.3\linewidth]{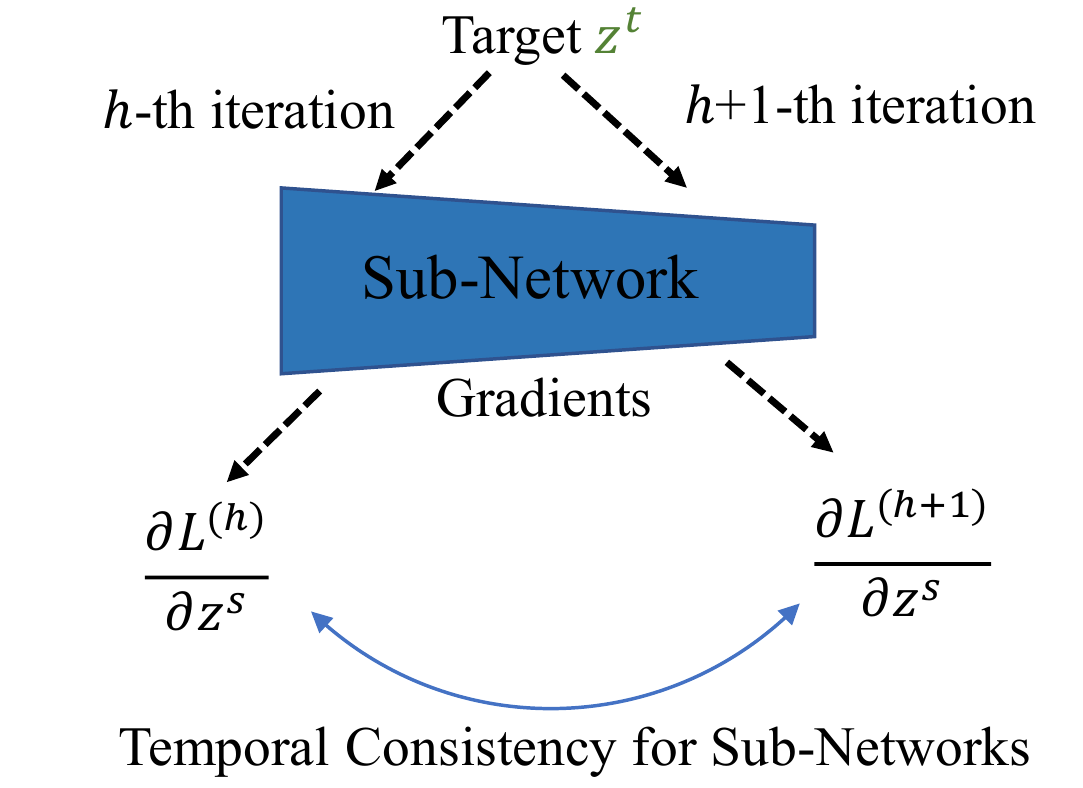}}
	\quad
	\subfloat[Dynamic Sampling]{
	\label{fig:network-c}
	\includegraphics[width=0.3\linewidth]{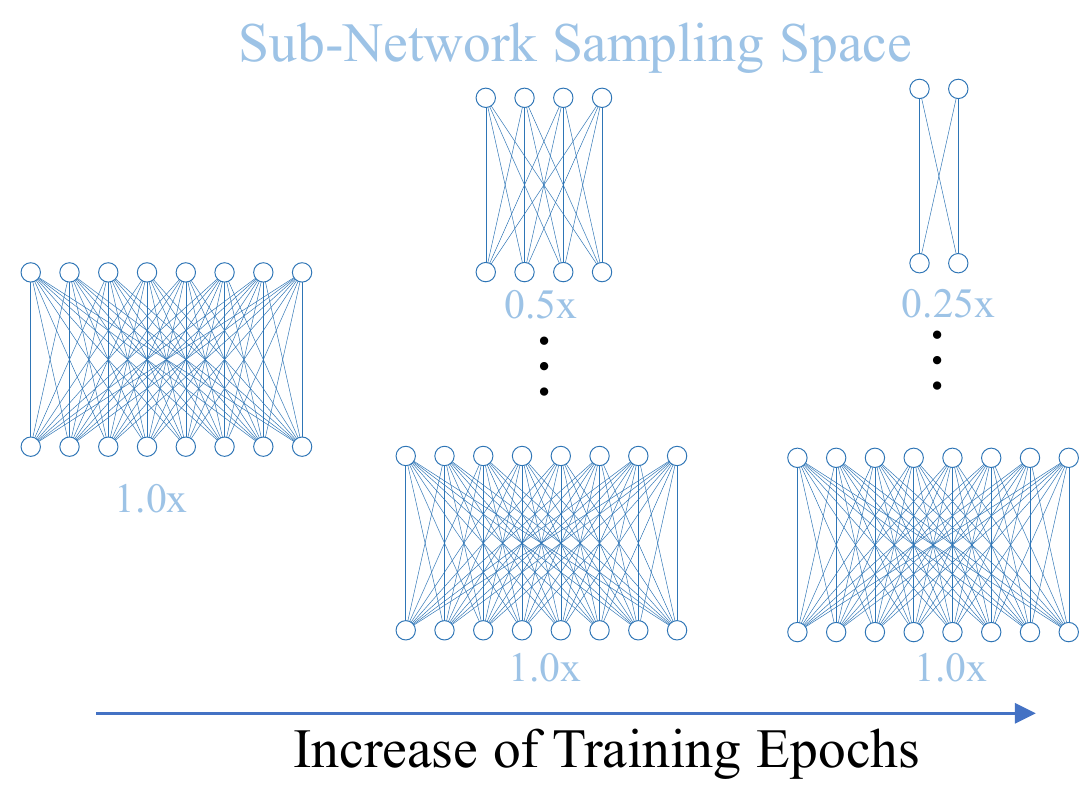}}
	\quad
	\subfloat[Group Regularization]{
	\label{fig:network-d}
	\includegraphics[width=0.3\linewidth]{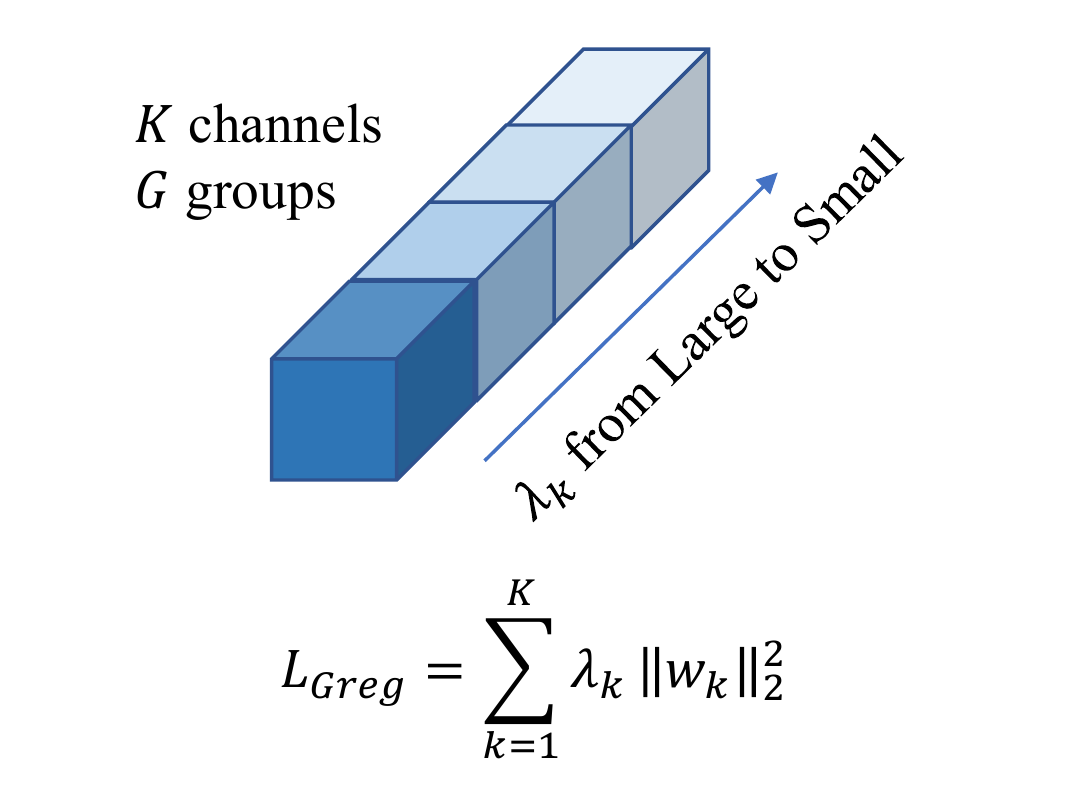}}
	\caption{The proposed framework and our method for universally slimmable self-supervised learning.}
	\label{fig:network}
\end{figure*}

\section{Method}
In this section, we begin with the notations and a brief review of previous works in Sec.~\ref{sec:preliminary}. Then, we introduce our method in Sec.~\ref{sec:method}, which we call Universally Slimmable Self-Supervised Learning (dubbed as US3L), as shown in Fig.~\ref{fig:network}. Finally, we show that temporal consistency of guidance is critical to the success of US-Net training by analyzing the stability of gradient updates of both self-supervised and supervised losses, and we propose three guidelines for the loss design to ensure this consistency in Sec.~\ref{sec:discussion}.

\subsection{Preliminary}\label{sec:preliminary}
In this subsection, we introduce two representative SSL methods SimSiam and SimCLR, as well as (universally) slimmable networks as preliminaries.

1) Self-supervised Losses. Let $\boldsymbol{x}_{i,1}$ and $\boldsymbol{x}_{i,2}$ denote two randomly augmented views from an input image $\boldsymbol{x}_i$. Let $f$ denote an encoder network consisting of a backbone (e.g., ResNet~\cite{resnet:he:CVPR16}) and a projection MLP head~\cite{simclr:hinton:ICML20}.

SimSiam~\cite{simsiam:kaiming:cvpr2021} maximizes the similarity between two augmentations of one image. A prediction MLP head~\cite{byol:grill:NIPS20}, denoted as $h$, transforms the output of one view and matches it to the other view. The output vectors for $\boldsymbol{x}_{i,1}$ are denoted as $\boldsymbol{z}_{i,1}\triangleq f(\boldsymbol{x}_{i,1})$ and $\boldsymbol{p}_{i,1}\triangleq h(f(\boldsymbol{x}_{i,1}))$, and $\boldsymbol{z}_{i,2}$ and $\boldsymbol{p}_{i,2}$ are defined similarly. The negative cosine similarity is defined as $D(\boldsymbol{p}, \boldsymbol{z})\triangleq -\frac{\boldsymbol{p}}{\Vert \boldsymbol{p} \Vert_2}\cdot \frac{\boldsymbol{z}}{\Vert \boldsymbol{z} \Vert_2}$ and we assume both $\boldsymbol{z}$ and $\boldsymbol{p}$ have been $L_2$-normalized for simplicity in 
subsequent discussions. Let $SG(\cdot)$ denote the stop-gradient operation. Then, the loss function in SimSiam is:
\begin{equation}
    \label{eq:simsiam}
    L_{\text{MSE}} = \sum_i D(\boldsymbol{p}_{i,1}, SG(\boldsymbol{z}_{i,2})) + D(\boldsymbol{p}_{i,2}, SG(\boldsymbol{z}_{i,1}))\,.
\end{equation}

SimCLR~\cite{simclr:hinton:ICML20} and MoCo~\cite{moco:kaiming:CVPR20} contrast with negative samples using InfoNCE~\cite{InfoNCE:arxiv2018} loss:
\begin{equation}
    \label{eq:infonce}
    L_{\text{NCE}} = - \sum_i \log \frac{e^{\boldsymbol{z}_{i,1} \cdot \boldsymbol{z}_{i,2}}}{e^{\boldsymbol{z}_{i,1} \cdot \boldsymbol{z}_{i,2}}+\sum_{j\neq i, v\in\{1,2\}} e^{\boldsymbol{z}_{i,1} \cdot \boldsymbol{z}_{j,v}}} \,,
\end{equation}
where we omit the temperature parameter $\tau$ for simplicity.

2) Slimmable Networks. Slimmable networks~\cite{slimmable:yu:ICLR19} are a class of networks that can be executable at different scales. During training, only the smallest, the largest and a few randomly sampled networks are used to calculate the loss in each iteration, which is known as the \textit{sandwich rule}. Further, inplace distillation~\cite{universal-slimmable:yu:ICCV19} is introduced to improve performance, where the knowledge inside the largest network is transferred to sub-networks by using distillation loss.

\subsection{The Proposed Method}\label{sec:method}
The following three subsections describe the components of our US3L method (Algorithm~\ref{algorithm}).

\subsubsection{Loss Design}
The general framework of our method is depicted in Fig.~\ref{fig:network-a}, in which the loss function is composed of base loss (for the base/largest network) and distillation loss (for sub-networks). By default, we use a momentum encoder to generate targets, InfoNCE as the base loss, and MSE as the distillation loss. Also, we show that we should use an auxiliary distillation head to mitigate the impact of the capacity difference between teacher and student. The overall loss function is 
\begin{equation}
\label{eq:full-loss}
    L = \underbrace{L_{\rm{NCE}}}_{L_{\rm{Base}}} \underbrace{-\sum_{i}\sum_{s} g(z_i^s) \cdot z_i^m}_{L_{\rm{Distill}}} \,,
\end{equation}
where $g(\cdot)$ is an auxiliary distillation MLP head, $z^s$ and $z^m$ are the output of the sub-network and momentum encoder, respectively. Notice that Eq.~\eqref{eq:full-loss} is not the only option for the loss design and it can work well as long as it satisfies our guidelines (Fig.~\ref{fig:network-b}), which will be discussed in Sec.~\ref{sec:discussion}.

\subsubsection{Dynamic Sampling}
It is worth noting that Yu \textit{et al.}~\cite{slimmable:yu:ICLR19} sampled four switches in each iteration throughout training, which is very time-consuming for SSL training. Therefore, we design a dynamic sampling strategy to reduce the training overhead while improving performance (Fig.~\ref{fig:network-c}). First, we argue that it is unnecessary to introduce the training of sub-networks at the beginning. We believe that a good and consistent teacher is essential for the learning of sub-networks~\cite{kd-patient:Beyer:CVPR22}, so we only need to train the base network at the start. Second, the training of sub-networks should be gradual. Specifically, the width of the smallest sub-network should be gradually reduced. By combining the two sampling strategies described above, we successfully reduce the sampling number $s$ from 4 to 3 (theoretical minimum sampling number) without performance drop (see appendix for detailed analysis and results). 

In our implementation, the training process is divided into two stages. In the first stage, only the largest network is trained (i.e., $s=1$). In the second stage, we sample the largest, the smallest plus a random width ($s=3$), and the width of the smallest model is gradually reduced. For example, the sampling width range in the second stage begins with [0.75,1.0], then [0.5,1.0], and finally [0.25,1.0].

\subsubsection{Group Regularization} \label{sec:groupreg}
Given two channels $k_1$ and $k_2$ ($k_1<k_2$), if $k_2$ is used in US-Net, then $k_1$ must also be used. In other words, the earlier channels are used more frequently than the later ones. Therefore, in the training of US-Net, the majority of the weights will be concentrated on the earlier channels to ensure the performance of sub-networks. However, such a weight distribution will limit the base model's capacity and thus affect its performance. To address this problem, we propose group regularization by giving more degrees of freedom (i.e., smaller regularization coefficients) to the later channels (Fig.~\ref{fig:network-d}), so that their weights are more fully utilized. We divide the total $K$ channels into $G$ groups in order, with each group containing $K_{G}=\lfloor K/G \rfloor$ channels. Then we define: 

\begin{equation}
    L_{\text{GReg}} = \sum_{k=1}^{K} \lambda_k \lVert \mathbf{w}_k \rVert _2^2 \,,
\end{equation}

\begin{equation}
\label{eq:greg_lambda}
    \lambda_k = \lambda (1- \lfloor k/K_G \rfloor \alpha) \,,
\end{equation}
where $\mathbf{w}$ denotes the weight matrix and we set $G=8$ and $\alpha=0.05$ throughout this paper. Notice that when $\alpha=0$, group regularization degenerates into the standard $L_2$ regularization. We also empirically demonstrate that group regularization is tailored for US-Net in the appendix.

\begin{algorithm}[t]
\caption{The proposed US3L method}
\label{algorithm}
\begin{algorithmic}[1]
\REQUIRE Define width range $R=[R_{\text{min}}, R_{\text{max}}]$x, for example, $R_{\text{min}}=0.25$, $R_{\text{max}}=1.0$. 
\FOR{$h=1,\dots, T_{iters}$}
    \STATE Define period length $T_p=\lfloor T_{iters}/4 \rfloor$.
    \STATE Clear gradients, \textit{optimizer.zero\_grad()}.
    \STATE Run base network $z_1=M(x_1), z_2=M(x_2)$.
    \STATE Compute base loss, $loss = L_{\rm{Base}}(z_1,z_2)+L_{\rm{GReg}}$.
    \STATE Detach label, $z^t_1=z_1.detach()$, $z^t_2=z_2.detach()$.
    \IF{$t\leq T_p$}
    \STATE Continue
    \ENDIF 
    \STATE Dynamic adjust range, $R_{\text{min}}=1-0.25\lfloor t/T_p \rfloor$.
    \STATE Randomly sample a width from $R$ as \textit{width samples}.
    \STATE Add the smallest width $R_{\text{min}}$ to \textit{width samples}.
    \FOR{\textit{width} in \textit{width samples}}
        \STATE Execute sub-network $M'$ at width, and distillation head $g$, $z^s_1=g(M'(x_1))$, $z^s_2=g(M'(x_2))$.
        \STATE $loss$ $\mathrel{+}=$ $L_{\rm{Distill}}(z^s_1, z^t_2)+L_{\rm{Distill}}(z^s_2, z^t_1)$.
    \ENDFOR
    \STATE Accumulate gradients, \textit{loss.backward().}
    \STATE Update weights, \textit{optimizer.step().}
\ENDFOR
\end{algorithmic}
\end{algorithm}

\subsection{Three Guidelines for Loss Design} \label{sec:discussion}
The special feature of US-Net training is the introduction of sub-network training (i.e., $L_{\text{Distill}}$), so the training stability of the sub-networks is very important. In this paper, we find that the key is to ensure the temporal consistency of guidance for sub-networks. One image has different views in two adjacent iterations, which will produce different outputs because the model has not converged and is unstable. We hope that the gradients generated by different views of \textit{the same image} will also be close between iterations (i.e., robust to changes and provide consistent guidance to sub-networks).

In the context of SSL, \eqref{eq:simsiam} and \eqref{eq:infonce} can also be adapted for distillation and we use $z^s$ and $z^t$ to denote the output of the sub and base network, respectively. The losses are:

\begin{equation}
    \label{eq:mse_distill}
    L_{\text{MSE-distill}} = - z_i^s \cdot z_i^t \,,
\end{equation}

\begin{equation}
    \label{eq:infonce_distill}
        L_{\text{NCE-distill}} = -z_i^s \cdot z_i^t + \log \sum_k e^{z_i^s \cdot z^t_{k}} \,.
\end{equation}

\begin{lemma}
    \label{lemma1}
    MSE is not robust to changes in the output, whereas NCE is stabilized by distances from other samples.
\end{lemma}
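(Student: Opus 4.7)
The plan is to treat the gradient of the distillation loss with respect to the sub-network output $z_i^s$ as the ``guidance signal'' the sub-network receives, and to measure how this signal varies when the teacher output $z_i^t$ is perturbed across two adjacent iterations by the stochasticity of data augmentation. This matches the informal notion of temporal consistency introduced in Sec.~\ref{sec:discussion}.

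First I would differentiate Eq.~\eqref{eq:mse_distill} to obtain $\nabla_{z_i^s} L_{\text{MSE-distill}} = -z_i^t$. Hence if $z_i^t \mapsto z_i^t + \Delta$ between iterations, the guidance drifts by exactly $-\Delta$: the MSE signal is a rigid, pointwise copy of the (stochastic) teacher output and has nothing to smooth out the view-induced noise. This already gives the first half of the lemma.

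Next I would differentiate Eq.~\eqref{eq:infonce_distill} to obtain $\nabla_{z_i^s} L_{\text{NCE-distill}} = -z_i^t + \sum_k p_k z_k^t$ with $p_k = e^{z_i^s\cdot z_k^t}/\sum_j e^{z_i^s\cdot z_j^t}$. Using $\sum_k p_k = 1$, this rewrites as $-\sum_k p_k(z_i^t - z_k^t)$, i.e., a convex combination of pairwise contrasts between the positive target and every other batch target. Under the same perturbation of $z_i^t$, the shift of the leading $-z_i^t$ term is partly offset by the corresponding shift inside each pairwise contrast, and the softmax weights $p_k$ themselves react as a Lipschitz function of the perturbed similarities on the unit sphere. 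I would then invoke the standard softmax-Jacobian bound to conclude that the NCE gradient perturbation scales like $\|\Delta\|$ times a factor that decreases as the other samples become more diverse, so the distances to other samples serve as a batch-level anchor that averages out view-level noise.

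The main obstacle will be making this stabilization genuinely quantitative: a meaningful contraction constant for NCE requires mild geometric assumptions on the negatives (for instance, that they are not all concentrated near $z_i^t$), without which the softmax-Lipschitz bound degenerates to a trivial one. For the lemma as stated, however, the essential content is the qualitative contrast between the two gradient expressions above---MSE's guidance equals the noisy target one-to-one, while NCE's guidance depends only on the relative geometry of the batch---and this reading follows immediately from the closed-form gradients, with the quantitative refinement deferred to the discussion of when temporal consistency holds.
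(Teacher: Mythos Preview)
Your proposal is correct and takes essentially the same approach as the paper: both compute $\nabla_{z_i^s}L$ for the two losses and observe that the MSE gradient equals $-z_i^t$ alone while the NCE gradient carries the additional centering term $\sum_k p_k z_k^t$, which depends on relative distances across the batch. The only minor difference is in the illustrative perturbation: you perturb a single target $z_i^t \mapsto z_i^t + \Delta$ and appeal to a softmax-Lipschitz bound, whereas the paper (just after the formal proof) applies a common rotation $w^\theta$ to \emph{all} teacher outputs and shows the NCE gradient difference becomes $(I-w^\theta)(z_i^t - \sum_j P_j z_j^t)$, which is small when $P_i \approx 1$---but both arrive at the same qualitative conclusion.
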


\begin{proof}
    For \eqref{eq:mse_distill}, the derivative can be derived as follows:
    \begin{equation}
        \frac{\partial L}{\partial z_i^s} = -z_i^t \,.
    \end{equation}

    For \eqref{eq:infonce_distill}, the derivative can be derived as follows: 
    \begin{equation}
        \frac{\partial L}{\partial z^s_i} = -z_i^t+\sum_j \frac{e^{z^s_i \cdot z^t_j}}{\sum_k e^{z^s_i \cdot z^t_k}} z^t_j
        \triangleq -z_i^t + \sum_{j} P_j z^t_j \,.
    \end{equation}
    Hence, we see that for MSE, the gradient only depends on the output $z^t$. This output will be very unstable in different iterations due to factors such as rapid model updates and image augmentations, resulting in temporal gradient instability. In contrast, NCE loss is also stabilized by the distance from other samples (corresponding to the extra $\sum_j P_j z^t_j$ term).
\end{proof}

To further illustrate Lemma~\ref{lemma1}, consider the following example (Fig.~\ref{fig:three-rules}b,c). Assume that due to image augmentations, all outputs are transformed by the same rotation matrix $w^{\theta}$ from the $h$-th to the $h$+1-th iteration (Fig.~\ref{fig:three-rules}a). The gradient difference between iterations for MSE is: 
\begin{equation}
    \label{eq:mse-diff}
   \frac{\partial L^{(h+1)}}{\partial z^s_i} - \frac{\partial L^{(h)}}{\partial z^s_i} = (I- w^{\theta})z^t_i \,,
\end{equation}
where $I$ denotes the identity matrix. For InfoNCE, we have: 
\begin{equation}
    \label{eq:nce-diff}
    \frac{\partial L^{(h+1)}}{\partial z^s_i} - \frac{\partial L^{(h)}}{\partial z^s_i} =  (I- w^{\theta})(z^t_i-\sum_j P_j z^t_j) \,.
\end{equation}

If the output of the student is already aligned with the teacher, then we have $P_i \approx 1$ and thus it can be verified that $
\lVert \frac{\partial L_{\text{NCE}}^{(h+1)}}{\partial z^s_i} - \frac{\partial L_{\text{NCE}}^{(h)}}{\partial z^s_i} \rVert_2 < \lVert \frac{\partial L_{\text{MSE}}^{(h+1)}}{\partial z^s_i} - \frac{\partial L_{\text{MSE}}^{(h)}}{\partial z^s_i} \rVert_2$. That is, NCE loss achieves better temporal stability than MSE for the learning of sub-networks.

\begin{figure}[t]
	\centering
	\includegraphics[width=\linewidth]{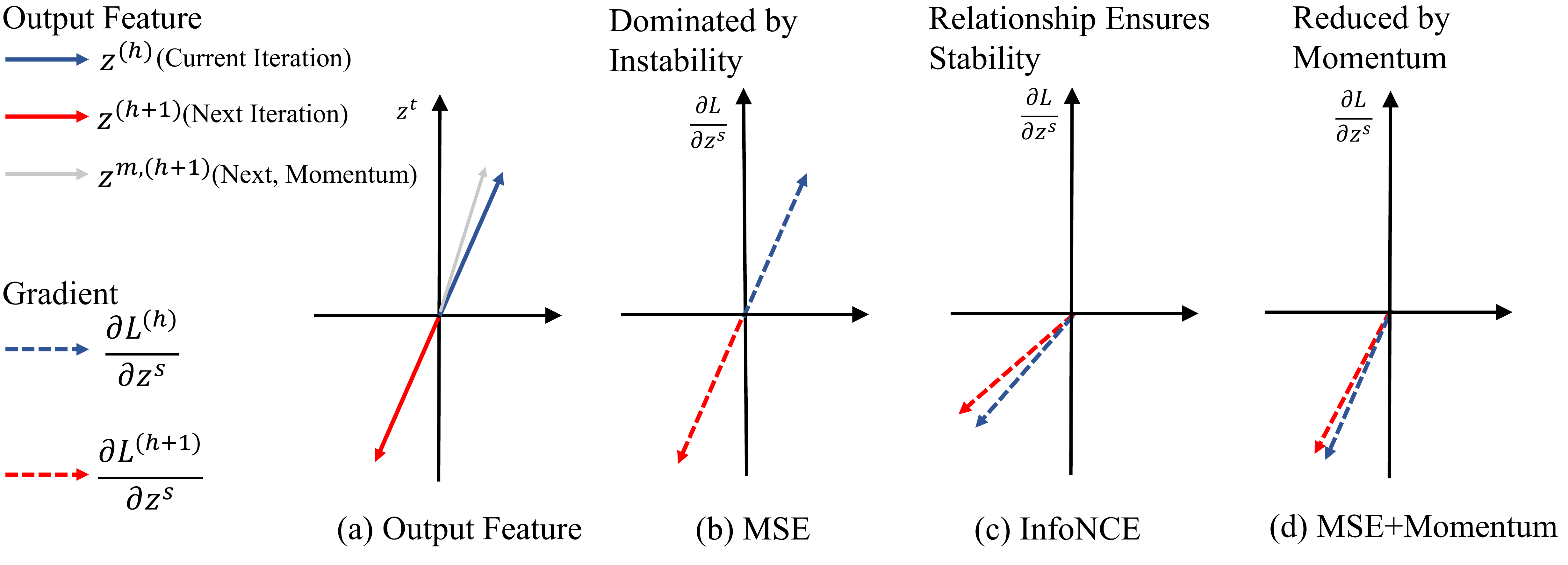}
	\caption{Illustration of feature changes and corresponding gradient changes under various settings. Best viewed in color.}
	\label{fig:three-rules}
\end{figure}

\begin{lemma}\label{lemma2}
    Supervised cross entropy (CE) is also stabilized by relative distance and temporal consistency is preserved.
\end{lemma}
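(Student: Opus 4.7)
The plan is to carry over the gradient calculation from the proof of Lemma~\ref{lemma1} almost verbatim, treating the linear-classifier weight vectors as the ``targets''. Concretely, for a labeled sample with ground-truth class $y_i$ and classifier weights $\{w_k\}_{k=1}^{C}$, I would write the supervised cross-entropy as
\begin{equation}
L_{\text{CE}} = -z_i^s \cdot w_{y_i} + \log \sum_k e^{z_i^s \cdot w_k} \,,
\end{equation}
which is structurally identical to \eqref{eq:infonce_distill} with the positive target $z_i^t$ replaced by $w_{y_i}$ and the contrastive set $\{z^t_k\}$ replaced by $\{w_k\}$. Differentiating then gives
\begin{equation}
\frac{\partial L_{\text{CE}}}{\partial z_i^s} = -w_{y_i} + \sum_k P_k w_k, \quad P_k = \frac{e^{z_i^s \cdot w_k}}{\sum_j e^{z_i^s \cdot w_j}} \,,
\end{equation}
so the same relative-distance stabilizer $\sum_k P_k w_k$ that rescued NCE in Lemma~\ref{lemma1} immediately reappears for CE, establishing the first half of the claim.

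Next I would replay the rotation thought-experiment used to produce \eqref{eq:nce-diff}: under the same assumption that augmentations act as a common rotation $w^{\theta}$ on the student representation between iterations, the gradient difference becomes $(I-w^{\theta})(w_{y_i} - \sum_k P_k w_k)$, which is controlled by the spread of $\{w_k\}$ around $w_{y_i}$ exactly as in the NCE case, and hence by the same argument is smaller in norm than the MSE gap in \eqref{eq:mse-diff}.

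For the ``temporal consistency'' half I would then highlight two extra sources of stability that supervised CE enjoys beyond NCE. First, the index $y_i$ of the positive target is pinned by the ground-truth label and cannot drift across iterations, so $w_{y_i}$ is not resampled at every step. Second, the prototypes $\{w_k\}$ are shared parameters updated only by gradient descent at rate $O(\eta)$ per step, rather than being recomputed from freshly augmented views in every iteration as $z_i^t$ is in SSL. Together these eliminate the dominant between-iteration variance that afflicted the self-supervised distillation targets and preserve temporal consistency of the guidance sent to sub-networks.

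The main obstacle I anticipate is making the ``$\{w_k\}$ drift slowly'' statement quantitative. The cleanest route I would take is a learning-rate bound $\|w_k^{(h+1)} - w_k^{(h)}\| = O(\eta)$ under standard smoothness assumptions on the classifier head, contrasted with the $O(1)$ augmentation-induced variability of the SSL targets $z_i^t$. Everything else is a direct transplant of the algebra already worked out in the proof of Lemma~\ref{lemma1}, so I do not expect any additional technical content beyond this one quantitative comparison.
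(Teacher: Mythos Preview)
Your core argument---writing CE in the InfoNCE-like form and reading off the gradient $-w_{y_i}+\sum_k P_k w_k$---is exactly what the paper does, and the paper in fact stops there: after displaying this derivative it simply remarks that ``the analysis is then similar to the NCE above'' without replaying the rotation calculation or making any quantitative drift bound.

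Where you go beyond the paper, one step does not transplant cleanly. In the rotation thought-experiment following Lemma~\ref{lemma1}, the rotation $w^{\theta}$ acts on \emph{network outputs}, so both $z_i^s$ and all the $z_j^t$ rotate together; that is why the inner products (hence the $P_j$) are unchanged and the factor $(I-w^{\theta})$ can be pulled out in \eqref{eq:nce-diff}. For CE the analogues of the $z_j^t$ are the classifier columns $w_k$, which are learned parameters, \emph{not} functions of the augmented input, and therefore are not rotated between iterations. Only $z_i^s$ rotates, so the $P_k$ change while the $w_k$ stay put, and the between-iteration gradient difference is $\sum_k (P_k'-P_k)\,w_k$, not $(I-w^{\theta})(w_{y_i}-\sum_k P_k w_k)$ as you wrote. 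Your second line of reasoning---that the positive index $y_i$ is pinned by the label and that the $\{w_k\}$ move only $O(\eta)$ per step---is what actually delivers temporal consistency in the CE case, and it stands on its own without the rotation formula; the paper does not spell this out either, but it is the right replacement for the rotation step rather than an addendum to it.
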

\begin{proof}
    Let $y_i$ denote the label for $x_i$, $C$ denote the number of classes, $\mathbf{w}\in{\mathbb{R}^{d\times{C}}}$ denote the weight matrix of the classification head ($d$ is the feature dimension). Then:
    \begin{equation*}
    L_{\text{CE}}  = - \log\frac{e^{\mathbf{w}_{y_i}^T\boldsymbol{z}_i}}{\sum_{j=1}^Ce^{\mathbf{w}_j^T\boldsymbol{z}_i}} = - \mathbf{w}_{y_i}^T\boldsymbol{z}_i+\log\sum_{j=1}^C e^{\mathbf{w}_j^T\boldsymbol{z}_i} \,.
    \end{equation*}
     The derivative is as follows:
    \begin{equation*}
    \frac{\partial{L_{\text{CE}}}}{\partial{\boldsymbol{z}_i}}=-\mathbf{w}_{y_i}+\sum_{j=1}^C\frac{e^{\mathbf{w}_j^T\boldsymbol{z}_i}}{\sum_{k=1}^C e^{\mathbf{w}_k^T{\boldsymbol{z}_i}}}\mathbf{w}_j \triangleq -\mathbf{w}_{y_i}+\sum\nolimits_{j} P_j\mathbf{w}_j\,.
    \end{equation*}
    The gradient is not only related to the target class weight $\mathbf{w}_{y_i}$ and the analysis is then similar to the NCE above.
\end{proof}

From Lemma~\ref{lemma2} we can understand the huge difference between MSE-based SimSiam and CE-based supervised classification in Table~\ref{tab:motivation}. The key is that inconsistent outputs can make the temporal gradient updates for MSE very unstable.

\begin{lemma} \label{lemma3}
    A momentum teacher will better preserve temporal consistency by producing slowly updating outputs. \footnote{It is a known fact that a momentum teacher reduces the degree of output change and helps the model with more stable training~\cite{byol:grill:NIPS20}.}
\end{lemma}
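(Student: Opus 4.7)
The plan is to turn the informal statement into a quantitative one: the per-iteration drift of the teacher's output, and hence the per-iteration drift of the gradient signal delivered to the sub-networks, shrinks linearly in $1-m$, where $m\in(0,1)$ is the EMA coefficient of the momentum teacher. First I would fix notation. Let $\theta_s^{(h)}$ denote the base-network (student) parameters at iteration $h$, and let the momentum teacher be maintained by $\theta_t^{(h+1)} = m\,\theta_t^{(h)} + (1-m)\,\theta_s^{(h+1)}$, so that the target $z_i^{t,(h)} = f_{\theta_t^{(h)}}(x_i)$ appearing in \eqref{eq:mse_distill} and \eqref{eq:infonce_distill} is produced by a slowly drifting network rather than by the student itself.

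Next I would compute the per-iteration change in the teacher's parameters and propagate it to the outputs via a first-order expansion. Rearranging the EMA rule gives $\theta_t^{(h+1)}-\theta_t^{(h)} = (1-m)\bigl(\theta_s^{(h+1)}-\theta_t^{(h)}\bigr)$, and a first-order Taylor expansion of $f$ in the parameters yields
\[
z_i^{t,(h+1)} - z_i^{t,(h)} \;\approx\; (1-m)\,\nabla_{\theta} f_{\theta_t^{(h)}}(x_i)\bigl(\theta_s^{(h+1)}-\theta_t^{(h)}\bigr).
\]
For self-distillation without momentum the teacher is the student, so the analogous calculation replaces the factor $(1-m)$ by $1$ and replaces $\theta_s^{(h+1)}-\theta_t^{(h)}$ by the raw SGD step $\theta_s^{(h+1)}-\theta_s^{(h)}$. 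Because $1-m$ is typically $10^{-2}$ or smaller and $\|\theta_s^{(h+1)}-\theta_t^{(h)}\|$ stays bounded along training (which follows from unrolling the EMA recursion under a bounded per-step SGD update), the momentum target changes by at most an $O(1-m)$ multiple of the non-momentum target change.

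To close the argument I would substitute this bound back into the gradient-difference expressions \eqref{eq:mse-diff} and \eqref{eq:nce-diff} from Lemma~\ref{lemma1}: both are linear in $z_i^t$ (respectively in $z_i^t - \sum_j P_j z_j^t$), so damping $\Delta z^t$ by a factor of $(1-m)$ damps the temporal gradient drift experienced by the sub-networks by the same factor, which is precisely the ``better preserves temporal consistency'' claim. The main obstacle will be deciding how much rigor to impose on the Taylor step: a fully rigorous $O(1-m)$ bound requires controlling the Lipschitz constant of $f$ in $\theta$ and the norm $\|\theta_s^{(h+1)}-\theta_t^{(h)}\|$, neither of which is developed elsewhere in the paper. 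I would therefore stop at the first-order linear-response level already used in Lemma~\ref{lemma1}, matching the analytical style of the preceding two lemmas rather than chasing a worst-case inequality.
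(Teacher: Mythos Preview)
Your proposal is considerably more rigorous than what the paper actually does. In the paper, Lemma~\ref{lemma3} carries no proof at all: the statement is asserted together with a footnote citing~\cite{byol:grill:NIPS20} and saying it ``is a known fact that a momentum teacher reduces the degree of output change and helps the model with more stable training.'' The paper then immediately proceeds to summarize the three guidelines without any derivation.

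Your EMA-plus-Taylor argument is sound at the level of analysis used in Lemmas~\ref{lemma1} and~\ref{lemma2}, and it has the virtue of making the $O(1-m)$ damping factor explicit and tying it back to the gradient-difference expressions \eqref{eq:mse-diff} and \eqref{eq:nce-diff}. This is genuinely more informative than the paper's treatment, which simply appeals to prior work. The trade-off is that your version introduces assumptions (bounded per-step SGD update, controlled Lipschitz constant of $f$ in $\theta$) that the paper never states, so if your goal is to match the paper you should simply cite the momentum-teacher stability result from~\cite{byol:grill:NIPS20} and move on. If instead you want a self-contained justification consistent with the preceding lemmas, your first-order linear-response argument is appropriate and your own caveat about stopping short of a worst-case inequality is the right call.
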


From the above analyses, we can summarize three guidelines below to ensure the temporal consistency of guidance: 

1. The base loss is based on the relative distance to produce temporal consistent outputs of the base network.

2. The distillation loss is based on the relative distance to produce temporal consistent guidance for sub-networks.

3. A momentum teacher is used to produce stable guidance for sub-networks.

Experimental results in Sec.~\ref{sec:ablation} further verify the effectiveness of our proposed three guidelines, and we will empirically find that \textbf{at least one of the three guidelines needs to be satisfied} to make it work for the US-Net framework.

\section{Experimental Results}
We introduce the implementation details in Sec.~\ref{sec:details}. We experiment with CNNs in Sec.~\ref{sec:exp-cifar} and ViTs in Sec.~\ref{sec:exp-vit} on CIFAR-100~\cite{cifar} and CIFAR-10~\cite{cifar}, respectively. Then, we experiment on ImageNet~\cite{ILSVRC2012:russakovsky:IJCV15} (IN) in Sec.~\ref{sec:exp-imagenet} and we evaluate the transfer performance of ImageNet pretrained models on downstream recognition, object detection, and instance segmentation benchmarks. Finally, we investigate the effects of different components in our method in Sec.~\ref{sec:ablation}.

\begin{table*}[!ht]
	\centering
	\small
	\caption{Main results on CIFAR-100. `-' denotes the model collapses. $n$ denotes the number of sub-models and $n=9$ in this table. $T$ denotes the cost of training one model on CIFAR-100 for 400 epochs and we do not consider the effect of model size here, because models of different sizes are encountered in each method. The best two results are \textbf{bolded} and \underline{underlined}, respectively.}
	\label{tab:main-cifar100}
	\setlength{\tabcolsep}{2.5pt}
	\renewcommand{\arraystretch}{0.75}
	\renewcommand{\multirowsetup}{\centering}
	\begin{tabular}{l|c|c|c|c|c|c|c|c|c|c|c|c|c}
		\hline
		\multirow{2}{*}{Backbone}& \multirow{2}{*}{Method}& Once& Pretrained & Training& \multicolumn{9}{c}{Linear Accuracy (\%)} \\
		\cline{6-14}
		& &  Training&Teacher&Cost&1.0x&0.9x&0.8x&0.7x&0.6x&0.5x&0.4x&0.3x&0.25x\\
		\hline
        \multirow{10}{*}{ResNet-18}
		&SimCLR~\cite{simclr:hinton:ICML20} & $\times$ &  $\times$ & $nT$ & 66.5	& 65.4	& 64.7&	63.7&	62.6&	61.0&	59.0&	56.1	&53.6 \\
		&SimSiam~\cite{simsiam:kaiming:cvpr2021} & $\times$  & $\times$& $ nT $ & 66.5	& 65.4	& 64.6&	63.5&	62.6&	60.0&	58.3&	54.9	&52.4\\
		&BYOL~\cite{byol:grill:NIPS20} & $\times$  & $\times$& $nT$ & 66.8	& 66.0	& 65.6	& 65.3&	63.0&	62.1&	59.5	&56.0	& 54.3\\
		&SEED~\cite{seed:fang:ICLR21} & $\times$  & BYOL R-50 &$nT$ & 67.3 & 66.6 &65.8&65.2&64.8&63.5&62.2&60.1&58.5 \\
		& SEED-MSE & $\times$  & BYOL R-18& $nT$& 67.5	& 67.2	& 66.5	& 66.0	& 65.9	& 64.8	& \underline{64.0}	& \underline{62.4} & 60.1\\
		& SEED-MSE & $\times$  & BYOL R-50&$nT$ & 67.5	&66.8&	66.7&	66.0&	65.4&	\underline{64.9}	&63.6&	61.3&	60.1\\
		& US~\cite{universal-slimmable:yu:ICCV19}+SimCLR & \checkmark &$\times$ & $4T$ &65.5&	64.9&	63.8&	63.6&	62.7	&61.8&	60.2&	58.2&	57.4 \\
		& US~\cite{universal-slimmable:yu:ICCV19}+SimSiam  & \checkmark & $\times$ & $4T$ & 57.5&57.4&57.3&57.0&56.3&55.4&54.5&53.1&52.4\\
		&Ours & \checkmark &$\times$ & $\mathbf{2.5}\boldsymbol{T}$ & \underline{69.0}&\underline{68.2}&\underline{68.0}&\underline{66.9}&\underline{66.1}& 64.7 & 62.6 & 60.9  & \underline{60.4} \\
		& Ours (800ep) &  \checkmark &$\times$ & $5T$& \textbf{70.1}&\textbf{69.3}&\textbf{69.0}&\textbf{68.7}&\textbf{67.3}&\textbf{66.4}&\textbf{64.2}&\textbf{63.1}&\textbf{62.3}\\
        \hline
        \multirow{7}{*}{ResNet-50}
		&BYOL~\cite{byol:grill:NIPS20} & $\times$  & $\times$ &$nT$ & 67.0	& 66.7	& 66.5	& 66.3	& 66.0	& 64.9	& 63.8	& 62.1	& 61.2 \\
		&SEED~\cite{seed:fang:ICLR21} & $\times$  & BYOL R-50 &$nT$& 70.3 & 69.8 &69.6 & 69.4 & 69.0 & 68.2 & 67.2 & 65.6 & 65.1\\
		& SEED-MSE & $\times$  & BYOL R-50 &$nT$& 69.4	& 69.0	& 68.5&	69.1&	68.4&	68.1	&67.3&	66.9&	66.4 \\
		& US~\cite{universal-slimmable:yu:ICCV19}+SimCLR  & \checkmark &$\times$ &$4T$& 70.1 & 69.9 & 69.7 & 69.3 & 68.7 & 68.2 & 67.5 & 66.0 &  65.5 \\
		& US~\cite{universal-slimmable:yu:ICCV19}+SimSiam  & \checkmark & $\times$ &$4T$ & 54.7 & 54.6 & 54.7 & 54.7 & 54.7 & 54.8 & 54.6 & 54.3 & 54.0 \\
		&Ours & \checkmark &$\times$ & $\mathbf{2.5}\boldsymbol{T}$ & \underline{72.6} & \underline{72.0} &  \underline{71.5} & \underline{71.2} & \underline{70.6} & \underline{70.2} &  \underline{68.6} & \underline{67.7} & \underline{67.4} \\
		& Ours (800ep) &  \checkmark &$\times$ &$5T$ & \textbf{73.0}&\textbf{72.5}&\textbf{71.9}&\textbf{71.6}&\textbf{71.1}&\textbf{70.8}&\textbf{69.1}&\textbf{68.0}&\textbf{67.6} \\
        \hline
        \multirow{6}{*}{MobileNetv2}
		&BYOL~\cite{byol:grill:NIPS20} & $\times$  & $\times$ &$nT$ & 61.2	& 60.7&	60.5&	60.2&	59.9&	58.7&	57.3	&54.6&	51.9 \\
		& SEED-MSE & $\times$  & BYOL R-50 &$nT$ & \textbf{68.6}	& \textbf{68.9} &	\textbf{67.6} &	\textbf{67.3} &	\textbf{67.4}	& \textbf{66.3}	& \textbf{65.5} &	\textbf{64.0} &	\textbf{62.6}  \\
		& SEED-MSE & $\times$  & BYOL MBv2 & $nT$ & 63.8	&63.5&	63.8&	\underline{63.6} &	\underline{63.6} &	\underline{63.3}	&\underline{62.7}&	\underline{62.1}&	\underline{59.8}  \\
		& US~\cite{universal-slimmable:yu:ICCV19}+SimCLR  & \checkmark &$\times$ & $4T$ &  56.2 & 56.0 & 55.3 & 55.0 & 54.8 & 54.3 & 54.0 & 53.2 &  52.2 \\
		& US~\cite{universal-slimmable:yu:ICCV19}+SimSiam  & \checkmark & $\times$ &$4T$ & - & - & -& - & - & - & - & - &-  \\
		&Ours & \checkmark &$\times$ &$\mathbf{2.5}\boldsymbol{T}$ &  \underline{65.7} & \underline{65.1} & \underline{64.2} & \underline{63.6} &  63.4 & 62.2 & 61.5 & 60.7 & 59.3  \\
        \hline
	\end{tabular}
\end{table*}

\subsection{Implementation Details}\label{sec:details}
\noindent\textbf{Datasets.} The main experiments are conducted on three benchmark datasets, i.e., CIFAR-10, CIFAR-100 and ImageNet. We also conduct transfer experiments on 5 recognition benchmarks as well as 2 detection benchmarks Pascal VOC 07\&12~\cite{VOC:mark:IJCV10} and COCO2017~\cite{coco:LinTY:ECCV14}.

\noindent\textbf{Backbones.} In addition to the commonly used ResNet-50~\cite{resnet:he:CVPR16}, we also adopt 2 smaller networks, i.e., ResNet-18 and MobileNetv2~\cite{mobilenetv2:sabdker:CVPR18}. Moreover, we evaluate our method on vision transformers~\cite{vit:dosovitskiy:ICLR21}. Sometimes we abbreviate ResNet-18/50 to R-18/50 and MobileNetv2 to MBv2.

\noindent\textbf{Training details.} We use SGD for pretraining, with a batch size of 512 and a base lr=0.5. The learning rate has a cosine decay schedule. The weight decay is 0.0001 and the SGD momentum is 0.9. We pretrain for 400 epochs on CIFAR-100 and 100 epochs on ImageNet unless otherwise specified.

\subsection{Experiments on CIFAR}\label{sec:exp-cifar}
We compare our method with state-of-the-art SSL methods BYOL~\cite{byol:grill:NIPS20}, SimSiam~\cite{simsiam:kaiming:cvpr2021} and SimCLR~\cite{simclr:hinton:ICML20}, and \textit{individually} pretrain for them to obtain models at different channel widths. Notice that it is not a fair comparison with our method because they must pretrain different models for different widths (i.e., need 9 pretrained models for 9 widths), whereas ours is \textit{trained only once with one copy of weights}. To better illustrate the effectiveness of our method, we also compare it with one strong baseline SEED~\cite{seed:fang:ICLR21}. In addition to training each model separately, SEED also requires an additional pretrained teacher model. The original implementation of SEED uses InfoNCE-based distillation loss and we add one more variant SEED-MSE (use MSE distillation loss). We also directly adapt different SSL methods to US-Net~\cite{universal-slimmable:yu:ICCV19} for comparison. We report the linear evaluation accuracy for pretrained models of all methods under the same setting.

As shown in Table~\ref{tab:main-cifar100}, our method achieves higher accuracy consistently than baseline methods. Take R-18 as an example, our method achieves \textbf{2.2\%} and \textbf{6.1\%} higher accuracy than BYOL at widths of 1x and 0.25x, respectively, with less training cost. When compared with the strong baseline SEED, our method even performs better in most cases, with much less training cost and additional dependencies. Also notice that the SEED-MSE variant performs even better than the original SEED, especially for small subnets, which is consistent with our analysis in Sec.~\ref{sec:discussion}. That is, when we have a pretrained teacher which can already generate consistent targets, it is sufficient to use MSE for distillation. We can observe similar trends and improvements for R-50. For MobileNetv2, our method outperforms individually trained BYOL and SEED-MSE (when using pretrained MBv2 as the teacher). However, when we use a better teacher (e.g., R-50), our method is inferior to SEED (the gap is within 3 points). Also, our method outperforms the US-Net baseline at all widths for all three backbones (the role of each component will be discussed in Sec.~\ref{sec:ablation}). Moreover, we can see that our method greatly reduces the training cost. The training time for individually-trained methods is proportional to the number of sub-networks $n$ that need to be used while ours is not affected by $n$. When compared with the original US-Net, we reduced the expected number of sampling in each iteration from 4 to 2.5 (see appendix for more analyses).

In conclusion, our method outperforms various individually trained SSL algorithms and even achieves comparable accuracy with knowledge distillation, by only training once.

\begin{table}[t]
	\centering
	\caption{Linear evaluation results for ViT on CIFAR-10.}
	\label{tab:main-vit-cifar10}
	\small
	\renewcommand{\arraystretch}{0.75}
	\setlength{\tabcolsep}{2.5pt}
	\renewcommand{\multirowsetup}{\centering}
	\begin{tabular}{l|c|c|c|c|c|c}
		\hline
		\multirow{2}{*}{Backbone}& \multirow{2}{*}{Method}& Once  & \multicolumn{4}{c}{Linear Accuracy (\%)} \\
		\cline{4-7}
		& &  Training &1.0x &0.75x & 0.5x & 0.25x\\
		\hline
		\multirow{3}{*}{ViT-Tiny}
		&MoCov3~\cite{mocov3:chen:ICCV21} & $\times$  & 82.6 & 79.5 & 75.8 & 68.0  \\
		&US+MoCov3   & \checkmark  & 79.8 &  79.4 & 77.6 & 76.4 \\
		&Ours   & \checkmark  & \textbf{86.0} &\textbf{84.7} & \textbf{83.3} &\textbf{80.2}  \\
         \hline
        \multirow{3}{*}{ViT-Small}
		&MoCov3~\cite{mocov3:chen:ICCV21} & $\times$  &  88.0 &  86.8 & 83.0 &  75.5  \\
		&US+MoCov3  & \checkmark  &  88.2 & 87.5 & 86.3 & 84.9 \\
		&Ours   & \checkmark  & \textbf{90.3} & \textbf{89.7} & \textbf{88.7} & \textbf{85.5}  \\
		\hline
	\end{tabular}
\end{table}

\subsection{Experiments with Vision Transformer}\label{sec:exp-vit}
To further demonstrate the efficacy of our method, we evaluate our method on vision transformers~\cite{vit:dosovitskiy:ICLR21} (ViTs) and adopt the popular MoCov3~\cite{mocov3:chen:ICCV21} as our baseline method. We employ the official code and experiment on CIFAR-10. To the best of our knowledge, we are the first to study slimmable self-supervised ViTs and we directly reduce the embedding dimension in all layers. As shown in Table~\ref{tab:main-vit-cifar10}, our method significantly outperforms individually trained MoCov3. Also, our method surpasses US+MoCov3 (adapt MoCov3 to US-Net) at all widths despite using less training cost. In conclusion, the results show that our method still works even for complex architectures such as vision transformers.

\subsection{ImageNet and Transferring Experiments}\label{sec:exp-imagenet}

In this subsection, we do unsupervised pretraining on the large-scale ImageNet training set without using labels. The linear evaluation results on ImageNet are shown in Table~\ref{tab:main-imagenet}. Also, we evaluate the transfer ability of the learned representations on ImageNet later. We train one model for our method and 4 separate models for BYOL, each of which is trained for 100 epochs. As shown in Table~\ref{tab:main-imagenet}, our US3L achieves higher accuracy than BYOL at all widths and our advantages become greater as the width shrinks.

\begin{table}[t]
	\centering
	\caption{Linear evaluation results on ImageNet.}
	\label{tab:main-imagenet}
	\small
	\renewcommand{\arraystretch}{0.75}
	\setlength{\tabcolsep}{2pt}
	\renewcommand{\multirowsetup}{\centering}
	\begin{tabular}{l|c|c|c|c|c|c}
		\hline
		\multirow{3}{*}{Backbone}& \multirow{3}{*}{Method}& Once  & \multicolumn{4}{c}{Linear Accuracy (\%)} \\
		\cline{4-7}
		& &  Training &1.0x &0.75x & 0.5x & 0.25x\\
		\hline
        \multirow{3}{*}{ResNet-18}
		&BYOL & $\times$  & 54.0 & 53.7 & 47.4 & 34.9 \\
		&US+BYOL & \checkmark & 55.9 & 53.1 & 48.0 & 40.6  \\
		&Ours   & \checkmark  & \textbf{56.9} &  \textbf{54.5} & \textbf{48.7} & \textbf{40.7} \\
        \hline
        \multirow{3}{*}{ResNet-50}
		&BYOL & $\times$   & 68.1 &66.3 &61.2 &50.9 \\
		&US+BYOL & \checkmark & 64.7 & 64.3 & 62.6 & 57.1  \\
		&Ours& \checkmark & \textbf{68.4} & \textbf{66.7} &\textbf{63.4} &\textbf{57.7}\\
        \hline
	\end{tabular}
\end{table}

We investigate the downstream object detection performance on Pascal VOC07\&12 in Fig.~\ref{fig:voc} and COCO2017 in Table~\ref{tab:coco}. The detector is Faster R-CNN~\cite{faster-rcnn:ren:NIPS15} for VOC and Mask R-CNN~\cite{mask-rcnn:he:ICCV17} for COCO (both with FPN~\cite{FPN:kaiming:CVPR17} backbone), following \cite{ssql:cao:ECCV2022}. Fig.~\ref{fig:voc} shows that our method outperforms BYOL on Pascal VOC at width of 1.0x. Also, as we decrease the width, our advantages over the baseline counterpart BYOL will be further expanded: up to \textbf{+2.6} and \textbf{+18.5} $\text{AP}_{50}$ at width of 0.5x and 0.25x, respectively. We can reach similar conclusions on COCO2017 from Table~\ref{tab:coco}. Although our method achieves comparable accuracy to BYOL at 1.0x on COCO, we achieve \textbf{+0.9} and \textbf{+1.5} $\text{AP}^\text{bb}$ points higher at 0.5x and 0.25x, respectively. Note that our improvements on COCO are not as large as that on VOC. It is because the amount of training data in COCO is large enough to close the gap between different pretrained models, as noted in~\cite{rethinkpretraining:he:ICCV19}.

We also transfer the ImageNet learned representations to 5 downstream recognition benchmarks in Table~\ref{tab:classification}. As seen, our method improves a lot on all recognition benchmarks (except for pets) under linear evaluation. 

\begin{figure}[t]
    \centering
    \includegraphics[width=0.9\columnwidth]{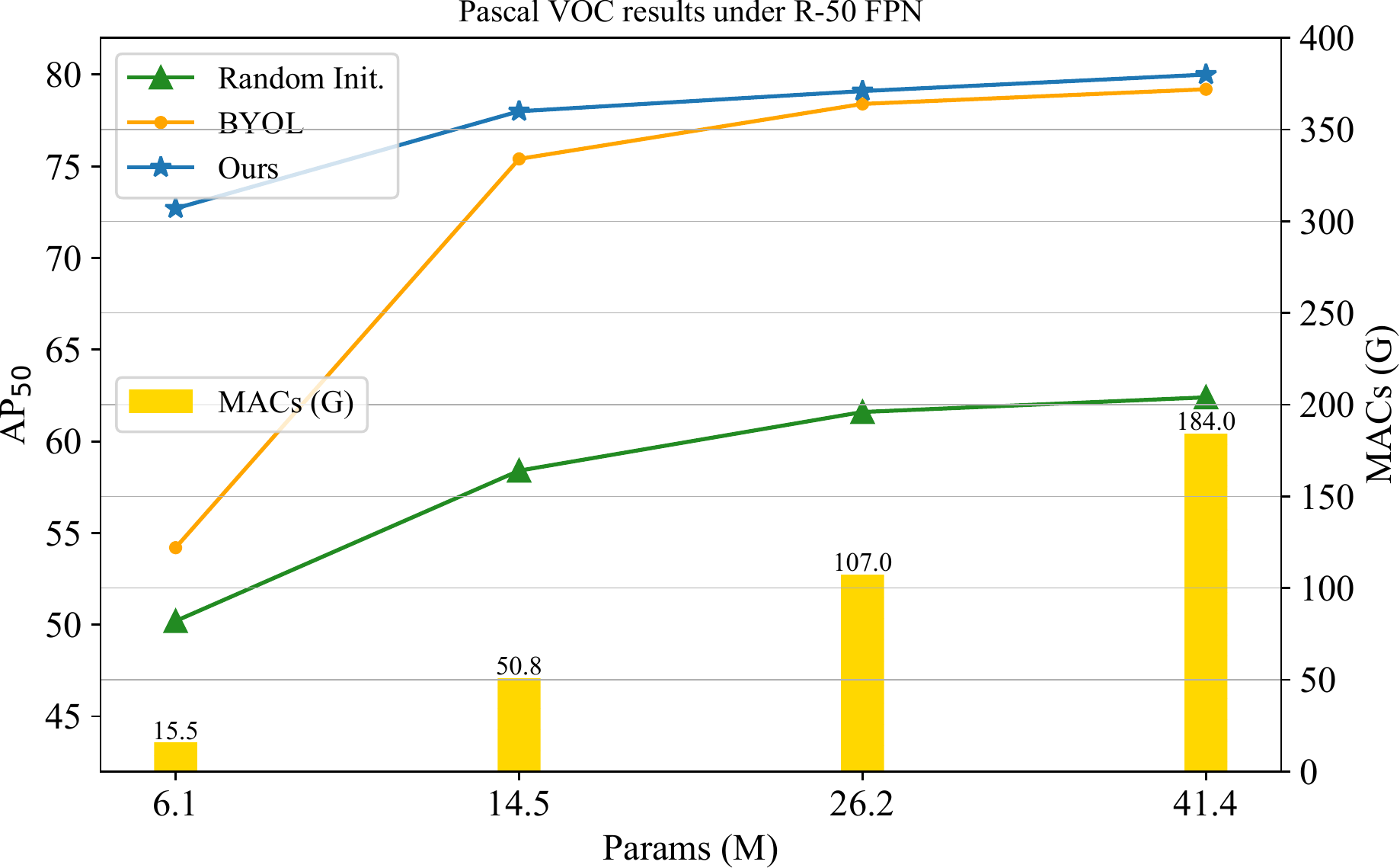}
    \caption{Transfer results on Pascal VOC 07\&12 under R50-FPN.}
    \label{fig:voc}
\end{figure}

\begin{table}[t]
	\centering
	\small
	\caption{Transfer results on COCO2017 object detection\& instance segmentation under R-50 FPN.}
	\label{tab:coco}
	\setlength{\tabcolsep}{2.5pt}
	\renewcommand{\arraystretch}{0.75}
	\renewcommand{\multirowsetup}{\centering}
	\begin{tabular}{c|c|c|c|c|c|c|c}
		\hline
		 Width & Method & $\text{AP}^{\text{bb}}$  &  $\text{AP}_{50}^{\text{bb}}$  &  $\text{AP}_{75}^{\text{bb}}$  &  $\text{AP}^{\text{mk}}$  &  $\text{AP}_{50}^{\text{mk}}$  & $\text{AP}_{75}^{\text{mk}}$  \\
		\hline
		\multirow{2}{*}{1.0x} & BYOL &  37.9 & 57.8 & 40.9 & 33.2 & 54.3 & 35.0 \\
		&Ours &\textbf{38.3} & \textbf{58.0} & \textbf{41.2} & \textbf{33.6} & \textbf{54.6} & \textbf{35.3} \\
		\hline
		\multirow{2}{*}{0.75x} & BYOL & 35.7 & 55.3 & 38.6 & 32.4 & 52.4 & 34.5  \\
		&Ours & \textbf{36.2} & \textbf{55.8} & \textbf{39.0} & \textbf{32.8} & \textbf{52.9} & \textbf{35.0} \\
        \hline
        \multirow{2}{*}{0.5x} & BYOL & 32.6 & 51.5 & 35.2 & 29.9 & 48.7 & 31.7   \\
		&Ours & \textbf{33.5} & \textbf{52.7} & \textbf{35.8} & \textbf{30.6} & \textbf{49.8} & \textbf{32.3} \\
		\hline
		 \multirow{2}{*}{0.25x} & BYOL & 26.0&43.5 & 27.0 & 24.2 & 40.8 & 25.3   \\
		 & Ours & \textbf{27.5} & \textbf{45.0} & \textbf{29.3} & \textbf{25.5} & \textbf{42.4} & \textbf{26.9} \\
		\hline
	\end{tabular}
\end{table}

\begin{table}[t]
	\centering
	\footnotesize
	\caption{Transfer results on recognition benchmarks under linear evaluation. `C-10/100' denotes `CIFAR-10/100'.}
	\label{tab:classification}
	\setlength{\tabcolsep}{1.5pt}
	\renewcommand{\arraystretch}{0.8}
	\renewcommand{\multirowsetup}{\centering}
	\begin{tabular}{l|c|c|c|c|c|c|c|c|c}
		\hline
		\multirow{2}{*}{Net}& \multirow{2}{*}{Width}& \multirow{2}{*}{Params}& \multirow{2}{*}{MACS}& \multirow{2}{*}{Method} & \multicolumn{5}{c}{Linear Accuracy (\%)} \\
		\cline{6-10}
		& &   & &&C-10 & C-100 & Flowers & Pets & Dtd\\
		\hline
        \multirow{8}{*}{R-50}
		&\multirow{2}{*}{1.0x}  &\multirow{2}{*}{22.56M}   &\multirow{2}{*}{4.11G}&BYOL & \textbf{87.1} & 60.6 & 81.0 & \textbf{80.9} & 70.7 \\
		&&&&Ours & \textbf{87.1} & \textbf{61.5}  & \textbf{90.6} &  79.4 & \textbf{72.6}\\
		\cline{2-10}
		&\multirow{2}{*}{0.75x}  &\multirow{2}{*}{14.77M}   &\multirow{2}{*}{2.34G}&BYOL & 83.6 & 52.8 & 82.9 & 74.2 & 66.3 \\
		&&&&Ours&\textbf{84.4} & \textbf{56.9} & \textbf{89.7} & \textbf{78.0} & \textbf{71.1} \\
		\cline{2-10}
		&\multirow{2}{*}{0.5x}  &\multirow{2}{*}{\phantom{0}6.92M}   &\multirow{2}{*}{1.06G}&BYOL & 80.6 & 52.0 & 74.8 & 75.0 & 65.9 \\
		&&&&Ours&\textbf{81.6} & \textbf{52.8} & \textbf{88.1} & \textbf{76.8} & \textbf{68.8} \\
		\cline{2-10}
    	&\multirow{2}{*}{0.25x}  &\multirow{2}{*}{\phantom{0}1.99M}   &\multirow{2}{*}{0.28G}&BYOL & 75.9 & 46.2 & 75.4 & 64.7 &  61.4 \\
		&&&&Ours & \textbf{78.9} &  \textbf{49.9}  & \textbf{84.4} & \textbf{74.0} & \textbf{64.9} \\
        \hline
	\end{tabular}
\end{table}

\subsection{Ablation Studies}\label{sec:ablation}
In this subsection, we demonstrate the effectiveness of the proposed three guidelines as well as the proposed strategies.

\begin{table*}[t]
    \caption{Ablation studies of the loss design under ResNet-18 on CIFAR-100. `-' denotes the model collapses.}
	\label{tab:ablation}
	\centering
	\small
	\setlength{\tabcolsep}{2.5pt}
	\renewcommand{\arraystretch}{0.7}
	\renewcommand{\multirowsetup}{\centering}
	\begin{tabular}{l|c|c|c|c|c|c|c|c|c|c|c|c|c|c}
		\hline
		\multirow{2}{*}{Base Loss}&\multirow{2}{*}{Case}&Distill& Auxiliary&\multicolumn{2}{c|}{Momentum Target} & \multicolumn{9}{c}{Linear Accuracy (\%)} \\
		\cline{5-15}
		&&Loss&Distill Head&Base Network&Sub Network&1.0x&0.9x&0.8x&0.7x&0.6x&0.5x&0.4x&0.3x&0.25x\\
		\hline
		\multirow{8}{*}{MSE}
		&1&$\times$&$\times$&$\times$&$\times$& - & - & - &-& -& -& -& -& - \\
		&2&MSE  &$\times$&$\times$&$\times$& 57.5&57.4&57.3&57.0&56.3&55.4&54.5&53.1&52.4  \\
		&3&MSE  &$\times$&\checkmark&\checkmark&    64.7&64.7&64.5&64.3&\textbf{63.9}&62.6&\textbf{61.3}&59.7&\textbf{59.3} \\
		&4&MSE  &\checkmark&\checkmark&\checkmark&  \textbf{65.4} & \textbf{65.0} & \textbf{64.8} & \textbf{64.5} & 63.8 & \textbf{62.7} & 61.1 & \textbf{59.8} & 58.9\\
		\cline{2-15}
		&5&InfoNCE&$\times$&$\times$&$\times$& 62.3&62.3&62.3&62.2&61.8&60.6&58.9&57.6&57.2 \\
		&6&InfoNCE&$\times$&$\times$&\checkmark& 63.7&63.8&63.7&63.6&63.1&62.0&60.6&59.3&58.2 \\ 
		&7&InfoNCE&$\times$&\checkmark&\checkmark&     65.0&65.0&65.1&\textbf{65.0}&64.5&62.7&61.3&59.8&59.2 \\
		&8&InfoNCE&\checkmark&\checkmark&\checkmark&  \textbf{65.5} &\textbf{65.5} &\textbf{65.6}& \textbf{65.0}&\textbf{64.6}&\textbf{63.2}&\textbf{61.6}&\textbf{60.2} & \textbf{59.7}\\
		\hline
		\multirow{9}{*}{InfoNCE}
		&9&$\times$&$\times$&$\times$&$\times$& 64.8 & 64.0 & 63.2 & 62.0 & 60.8 & 59.8	& 57.4 & 55.1 & 54.2 \\
		&10&MSE&$\times$&$\times$&$\times$& 65.0 &	64.4 & 63.1	& 62.3	& 61.9	& 60.3	& 58.3	& 57.1	& 56.6 \\
		&11&MSE&$\times$&$\times$&\checkmark& 65.8 &	65.0&	64.4&	63.4	&62.7	&61.8&	59.8&	58.5&	57.6 \\
		&12&MSE&$\times$&\checkmark&\checkmark&  66.9&	66.3&	65.7&	64.9	&63.8&62.9&	61.6&	59.5&	59.1 \\
		&13&MSE&\checkmark&\checkmark&\checkmark& \textbf{67.7}	& \textbf{67.2} &\textbf{66.5}&\textbf{66.0}	& \textbf{65.1} & 	\textbf{64.3} &	\textbf{62.5} &\textbf{60.5}&\textbf{59.6} \\
		\cline{2-15}
		&14&InfoNCE&$\times$&$\times$&$\times$& 65.5&	64.9&	63.8&	63.6&	62.7	&61.8&	60.2&	58.2&	57.4 \\
		&15&InfoNCE&$\times$&$\times$&\checkmark& 64.7&64.5	&64.0	& 63.6	&62.3	&61.4&	59.8&	58.4&	57.9  \\
		&16&InfoNCE&$\times$&\checkmark&\checkmark&  66.0 &	65.4	& 64.8	& 64.3	& 63.8	& 62.4	& 61.1	& 59.8	& 58.7  \\
		&17&InfoNCE&\checkmark&\checkmark&\checkmark& \textbf{67.4} &	\textbf{66.0}&	\textbf{66.1}&	\textbf{65.6} &	\textbf{64.7} &	\textbf{64.0} &	\textbf{62.2} &	\textbf{60.2} &	\textbf{59.5} \\
        \hline
	\end{tabular}
\end{table*}

\begin{table}[t]
	\caption{Ablation studies of our strategies on CIFAR-100.}
	\label{tab:ablation1}
	\centering
	\small
	\setlength{\tabcolsep}{2pt}
	\renewcommand{\arraystretch}{0.7}
	\renewcommand{\multirowsetup}{\centering}
	\begin{tabular}{c|c|c|c|c|c|c|c|c}
		\hline
		 \multirow{2}{*}{Backbone}& Dynamic  &Group & \multicolumn{6}{c}{Linear Accuracy (\%)} \\
		\cline{4-9}
		&Sampling & Reg. &1.0x &0.8x &0.6x&0.5x &0.3x&0.25x\\
		\hline
		\multirow{4}{*}{R-18}
		&$\times$ &$\times$&  67.7 & 66.5 & 65.1 & 	64.3 & 60.5 & 59.6 \\
		&\checkmark &$\times$ &68.6 &67.2 &65.5&64.6&60.7& 59.9\\
		&$\times$ &\checkmark & 68.6 & 67.3 & 65.5 & 64.4 & \textbf{60.9} & 60.1 \\
		&\checkmark & \checkmark  & \textbf{69.0} &\textbf{68.0} &\textbf{66.1}&\textbf{64.7} &\textbf{60.9} & \textbf{60.4}\\
        \cline{1-9}
		\multirow{4}{*}{R-50}
		&$\times$ &$\times$&  71.0  & 70.6  & 70.0 & 69.1   & 67.2 & 66.8 \\
		&\checkmark &$\times$ & 71.8  & 71.1   & 70.2 & 69.3 & 67.3   & 67.2 \\
		&$\times$ &\checkmark & 71.9 & 71.1 & 70.0 & 69.6 & \textbf{67.7} & \textbf{67.5}\\
		&\checkmark  & \checkmark  &  \textbf{72.6}   &  \textbf{71.5} &   \textbf{70.6} & \textbf{70.2} &  \textbf{67.7}   & 67.4 \\
		\cline{1-9}
		\multirow{4}{*}{MBv2}
		&$\times$ &$\times$&62.9 &62.0 & 61.5&60.4&59.6 &58.7  \\
		&\checkmark &$\times$ & 64.7 &  63.3 &  62.3 & 61.7 & \textbf{60.7} &   59.2 \\
		&$\times$ &\checkmark & 64.0 & 63.2 & 62.1 & 61.4&60.2&59.0 \\
		&\checkmark  & \checkmark  & \textbf{65.7}  & \textbf{64.2} &  \textbf{63.4} & \textbf{62.2} & \textbf{60.7}   & \textbf{59.3} \\
		\hline
	\end{tabular}
\end{table}

\subsubsection{Effectiveness of The Three Guidelines}
We investigate various combinations of training loss, distillation head and momentum target in Table~\ref{tab:ablation}. The `Base Loss' column represents the loss function for the base (i.e., largest) model. The `Distill Loss' column represents the distillation loss for sub-networks (`$\times$' means sub-networks use the same loss as the base network without using distillation). The `Auxiliary Distill Head' column indicates whether to use an additional head for distillation (i.e., $g(\cdot)$ in~\eqref{eq:full-loss}). The `Momentum Target' column indicates whether to maintain a momentum encoder of the base network, `Base/Sub Network' column represents whether the base/sub networks use the output of the momentum encoder as the target. We have the following observations from Table~\ref{tab:ablation}:

\squishlist
    \item As aforementioned, the model will collapse if we use individual MSE loss for sub-networks in case 1 (i.e., SimSiam). This phenomenon is \textit{completely different from the supervised case}, where each sub-network can be trained with cross entropy loss alone to achieve good results. We conjecture that the base and sub networks directly imitate their respective targets when using MSE, which will bring instability to the entire model, as analyzed in Sec.~\ref{sec:discussion}.
    \item Following US-Net, we use inplace distillation to guide sub-networks in case 2 and it solves the collapse problem. It is because now all sub-networks are aligned to the same target, which ensures cross-subnet consistency. Nevertheless, there is still a large gap compared with individually trained self-supervised networks (nearly 10 points).
    \item If we continue to use a momentum teacher, we will find consistent improvements in Table~\ref{tab:ablation} (e.g., case 3 vs. case 2). When comparing case 7 and case 6, we find that it is better to align all networks to the same target, which confirms our analysis of guidance consistency again. Consistency should not only exist between iterations, but also across sub-networks (all sub-networks should use the same target).
    \item \textbf{Experimental results are in full agreement with the proposed three guidelines}. Lemma~\ref{lemma1} states that InfoNCE can obtain a more stable gradient than MSE. Notice that the loss function consists of the base and distillation loss. (i) When the base loss uses MSE, the output of the base network will be unstable, so the sub-networks need to use InfoNCE loss for distillation to deal with this instability (case 5 v.s. case 2). (ii) When the base loss uses InfoNCE, the base network can achieve better temporal stability. Hence, the sub-networks already get stable targets from the teacher and using MSE or InfoNCE for distillation (case 10\&14) can achieve good results. In short, in the absence of a momentum teacher, at least one of base loss and distillation loss should use InfoNCE to ensure stability.
    \item The use of an auxiliary distillation head will result in consistent improvements when we compare the last two rows of each block in Table~\ref{tab:ablation} (e.g., case 8 vs. case 7).
\squishend

\subsubsection{Effectiveness of Our Strategies} 
We study the effect of our proposed strategies in Table~\ref{tab:ablation1} and the baseline is the best practice in Table~\ref{tab:ablation} (i.e., case 13). We can have the following conclusions from Table~\ref{tab:ablation1}:
\squishlist
    \item Our dynamic sampling strategy can improve the accuracy of the base network as well as sub-networks significantly. Although our total number of iterations is less (without training sub-networks at the start), we can still guarantee the performance of small sub-networks. It is because now we get a better and more stable teacher and then distillation speeds up the convergence of sub-networks.
    \item Our group regularization can also significantly improve the accuracy of the largest model while improving the accuracy of sub-networks. The experimental results verify our analysis in Sec.~\ref{sec:groupreg} that our grouping strategy can alleviate the problem of limited model capacity in US-Net.
\squishend

We also study the intersection of the sandwich rule and dynamic sampling in Table~\ref{tab:intersect}.
As seen, our dynamic sampling strategy can be used alone or combined with the sandwich rule, which brings improved performance for both scenarios. The results further demonstrate the universality and effectiveness of our dynamic sampling.

\begin{table}
    \centering
	\caption{Intersection of the sandwich rule and dynamic sampling on CIFAR-100 under ResNet-18.}
	\label{tab:intersect}
    \small
    \setlength{\tabcolsep}{2.5pt}
    \renewcommand{\arraystretch}{0.75}
    \renewcommand{\multirowsetup}{\centering}
    \begin{tabular}{c|c|c|c|c|c|c|c}
    	\hline
    	 Sandwich  &Dynamic & \multicolumn{6}{c}{Linear Accuracy (\%)} \\
    	\cline{3-8}
    	Rule & Sampling &1.0x &0.8x &0.6x&0.5x &0.3x&0.25x\\
    	\hline
    	$\times$ & $\times$ & 65.1 & 65.0 & 64.7 & 63.2 & 59.4&56.4 \\
    	$\times$ & \checkmark & 67.4 & \textbf{67.3} & \textbf{65.9} & \textbf{64.7} & 59.9 & 58.7 \\
    	\checkmark &$\times$&  67.7 & 66.5 & 65.1 & 	64.3 & 60.5 & 59.6 \\
    	\checkmark &\checkmark & \textbf{68.6} & \textbf{67.3} & 65.5 & 64.4 & \textbf{60.9} & \textbf{60.1} \\
    	\hline
    \end{tabular}
\end{table}

\section{Conclusions}
In this paper, we proposed a method called US3L for training universally slimmable self-supervised models. We provided theoretical analyses about the loss design and proposed three guidelines to ensure temporal consistency for US-Net training. Moreover, we proposed dynamic sampling and group regularization to solve the problems of inefficient training and limited model capacity. Experiments on various benchmarks and architectures (both CNNs and ViTs) show that our method significantly outperforms various baselines. When transferring to various downstream tasks, our models exhibit significant advantages at different widths, with only once training and one copy of weights. In the future, we will try to compress in dimensions such as depth and kernel size and explores combinations with NAS methods.

{\small
\bibliographystyle{ieee_fullname}
\bibliography{egbib}

\begin{thebibliography}{10}\itemsep=-1pt

\bibitem{kd-patient:Beyer:CVPR22}
Lucas Beyer, Xiaohua Zhai, Amelie Royer, Larisa Markeeva, Rohan Anil, and
  Alexander Kolesnikov.
\newblock {Knowledge distillation: A good teacher is patient and consistent}.
\newblock In {\em The IEEE Conference on Computer Vision and Pattern
  Recognition}, pages 10925--10934, 2022.

\bibitem{once-for-all:hansong:ICLR20}
Han Cai, Chuang Gan, Tianzhe Wang, Zhekai Zhang, and Song Han.
\newblock {Once-for-all: Train one network and specialize it for efficient
  deployment}.
\newblock In {\em The International Conference on Learning Representations},
  pages 1--14, 2020.

\bibitem{ssql:cao:ECCV2022}
Yun-Hao Cao, Peiqin Sun, Yechang Huang, Jianxin Wu, and Shuchang Zhou.
\newblock Synergistic self-supervised and quantization learning.
\newblock In {\em The European Conference on Computer Vision}, volume 13690 of
  {\em LNCS}, page 587–604. Springer, 2022.

\bibitem{deepclustering:caron:ECCV18}
Mathilde Caron, Piotr Bojanowski, Armand Joulin, and Matthijs Douze.
\newblock Deep clustering for unsupervised learning of visual features.
\newblock In {\em The European Conference on Computer Vision}, volume 11218 of
  {\em LNCS}, pages 132--149. Springer, 2018.

\bibitem{DATA:chang:CVPR2022}
Qing Chang, Junran Peng, Lingxie Xie, Jiajun Sun, Haoran Yin, Qi Tian, and
  Zhaoxiang Zhang.
\newblock {DATA: Domain-aware and task-aware self-supervised learning}.
\newblock In {\em The IEEE Conference on Computer Vision and Pattern
  Recognition}, pages 9841--9850, 2022.

\bibitem{ViTSlim:chavan:CVPR2022}
Arnav Chavan, Zhiqiang Shen, Zhuang Liu, Zechun Liu, Kwang-Ting Cheng, and Eric
  Xing.
\newblock Vision transformer slimming: {Multi-dimension} searching in
  continuous optimization space.
\newblock In {\em The IEEE Conference on Computer Vision and Pattern
  Recognition}, pages 4931--4941, 2022.

\bibitem{simclr:hinton:ICML20}
Ting Chen, Simon Kornblith, Mohammad Norouzi, and Geoffrey Hinton.
\newblock A simple framework for contrastive learning of visual
  representations.
\newblock In {\em The International Conference on Machine Learning}, pages
  1597--1607, 2020.

\bibitem{mocov2:xinlei:arxiv2020}
Xinlei Chen, Haoqi Fan, Ross Girshick, and Kaiming He.
\newblock Improved baselines with momentum contrastive learning.
\newblock {\em arXiv preprint arXiv:2003.04297}, 2020.

\bibitem{simsiam:kaiming:cvpr2021}
Xinlei Chen and Kaiming He.
\newblock Exploring simple siamese representation learning.
\newblock In {\em The IEEE Conference on Computer Vision and Pattern
  Recognition}, pages 15750--15758, 2021.

\bibitem{mocov3:chen:ICCV21}
Xinlei Chen, Saining Xie, and Kaiming He.
\newblock An empirical study of training self-supervised vision transformers.
\newblock In {\em The IEEE International Conference on Computer Vision}, pages
  9640--9649, 2021.

\bibitem{vit:dosovitskiy:ICLR21}
Alexey Dosovitskiy, Lucas Beyer, Alexander Kolesnikov, Dirk Weissenborn,
  Xiaohua Zhai, Thomas Unterthiner, Mostafa Dehghani, Matthias Minderer, Georg
  Heigold, Sylvain Gelly, Jakob Uszkoreit, and Neil Houlsby.
\newblock An image is worth 16x16 words: {Transformers} for image recognition
  at scale.
\newblock In {\em International Conference on Learning Representations}, pages
  1--12, 2021.

\bibitem{VOC:mark:IJCV10}
Mark Everingham, Luc~Van Gool, Christopher~KI Williams, John Winn, and Andrew
  Zisserman.
\newblock {The pascal visual object classes (VOC) challenge}.
\newblock {\em International Journal of Computer Vision}, 88(2):303--338, 2010.

\bibitem{seed:fang:ICLR21}
Zhiyuan Fang, Jianfeng Wang, Lijuan Wang, Lei Zhang, Yezhou Yang, and Zicheng
  Liu.
\newblock {SEED: Self-supervised distillation for visual representation}.
\newblock In {\em The International Conference on Learning Representations},
  pages 1--12, 2021.

\bibitem{byol:grill:NIPS20}
Jean-Bastien Grill, Florian Strub, Florent Altche, Corentin Tallec, Pierre
  H.Richemond, Elena Buchatskaya, Carl Doersch, Bernardo~Avila Pires,
  Zhaohan~Daniel Guo, Mohammand~Gheshlaghi Azar, Bial Piot, Koray Kavukcuoglu,
  Remi Munos, and Michal Valko.
\newblock {Boostrap your own latent: A new approach to self-supervised
  learning}.
\newblock In {\em Advances in neural information processing systems}, pages
  21271--21284, 2020.

\bibitem{deepcompression:han:ICLR16}
Song Han, Huizi Mao, and William~J. Dally.
\newblock Deep compression: {Compressing} deep neural networks with pruning,
  trained quantization and huffman coding.
\newblock In {\em The International Conference on Learning Representations},
  pages 1--14, 2016.

\bibitem{moco:kaiming:CVPR20}
Kaiming He, Haoqi Fan, Yuxin Wu, Saining Xie, and Ross Girshick.
\newblock Momentum contrast for unsupervised visual representation learning.
\newblock In {\em The IEEE Conference on Computer Vision and Pattern
  Recognition}, pages 9729--9738, 2020.

\bibitem{rethinkpretraining:he:ICCV19}
Kaiming He, Ross Girshick, and Piotr Doll{\'a}r.
\newblock Rethinking imagenet pre-training.
\newblock In {\em The IEEE International Conference on Computer Vision}, pages
  4918--4927, 2019.

\bibitem{mask-rcnn:he:ICCV17}
Kaiming He, Georgia Gkioxari, Piotr Doll{\'a}r, and Ross Girshick.
\newblock {Mask R-CNN}.
\newblock In {\em The IEEE International Conference on Computer Vision}, pages
  2961--2969, 2017.

\bibitem{resnet:he:CVPR16}
Kaiming He, Xiangyu Zhang, Shaoqing Ren, and Jian Sun.
\newblock Deep residual learning for image recognition.
\newblock In {\em The IEEE Conference on Computer Vision and Pattern
  Recognition}, pages 770--778, 2016.

\bibitem{distillation:hinton:arxiv2015}
Geoffrey Hinton, Oriol Vinyals, and Jeff Dean.
\newblock Distilling the knowledge in a neural network.
\newblock {\em arXiv preprint arXiv:1503.02531}, 2015.

\bibitem{cifar}
Alex Krizhevsky and Geoffrey~E. Hinton.
\newblock Learning multiple layers of features from tiny images.
\newblock Technical report, University of Toronto, 2009.

\bibitem{dynamic-slimmable:Li:CVPR21}
Changlin Li, Guangrun Wang, Bing Wang, Xiaodan Liang, Zhihui Li, and Xiaojun
  Chang.
\newblock Dynamic slimmable network.
\newblock In {\em The IEEE Conference on Computer Vision and Pattern
  Recognition}, pages 8607--8617, 2021.

\bibitem{FPN:kaiming:CVPR17}
Tsung-Yi Lin, Piotr Doll{\'a}r, Ross Girshick, Kaiming He, Bharath Hariharan,
  and Serge Belongie.
\newblock Feature pyramid networks for object detection.
\newblock In {\em The IEEE Conference on Computer Vision and Pattern
  Recognition}, pages 2177--2125, 2017.

\bibitem{coco:LinTY:ECCV14}
Tsung-Yi Lin, Michael Maire, Serge Belongie, James Hays, Pietro Perona, Deva
  Ramanan, Piotr Doll{\'a}r, and C~Lawrence Zitnick.
\newblock {Microsoft COCO: Common objects in context}.
\newblock In {\em The European Conference on Computer Vision}, volume 8693 of
  {\em LNCS}, pages 740--755. Springer, 2014.

\bibitem{thinet:luo:ICCV17}
Jian-Hao Luo, Jianxin Wu, and Weiyao Lin.
\newblock Thinet: {A} filter level pruning method for deep neural network
  compression.
\newblock In {\em The IEEE International Conference on Computer Vision}, pages
  5058--5066, 2017.

\bibitem{jiasaw:mehdi:ECCV16}
Mehdi Noroozi and Paolo Favaro.
\newblock Unsupervised learning of visual representations by solving jigsaw
  puzzles.
\newblock In {\em The European Conference on Computer Vision}, volume 9910 of
  {\em LNCS}, pages 69--84. Springer, 2016.

\bibitem{faster-rcnn:ren:NIPS15}
Shaoqing Ren, Kaiming He, Ross Girshick, and Jian Sun.
\newblock {Faster R-CNN}: Towards real-time object detection with region
  proposal networks.
\newblock In {\em Advances in neural information processing systems}, pages
  91--99, 2015.

\bibitem{ILSVRC2012:russakovsky:IJCV15}
Olga Russakovsky, Jia Deng, Hao Su, Jonathan Krause, Sanjeev Satheesh, Sean Ma,
  Zhiheng Huang, Andrej Karpathy, Aditya Khosla, Michael Bernstein,
  Alexander~C. Berg, and Li Fei-Fei.
\newblock {ImageNet large scale visual recognition challenge}.
\newblock {\em International Journal of Computer Vision}, 115(3):211--252,
  2015.

\bibitem{mobilenetv2:sabdker:CVPR18}
Mark Sandler, Andrew Howard, Menglong Zhu, Andrey Zhmoginov, and Liang-Chieh
  Chen.
\newblock {MobileNetV2}: Inverted residuals and linear bottlenecks.
\newblock In {\em The IEEE Conference on Computer Vision and Pattern
  Recognition}, pages 4510--4520, 2018.

\bibitem{InfoNCE:arxiv2018}
Aarin van~den Oord, Yazhe Li, and Oriol Vinyals.
\newblock Representation learning with contrastive predictive coding.
\newblock {\em arXiv preprint arXiv:1807.03748}, 2018.

\bibitem{universal-slimmable:yu:ICCV19}
Jiahui Yu and Thomas Huang.
\newblock Universally slimmable networks and improved training techniques.
\newblock In {\em The IEEE International Conference on Computer Vision}, pages
  1803--1811, 2019.

\bibitem{slimmable:yu:ICLR19}
Jiahui Yu, Linjie Yang, Ning Xu, Jianchao Yang, and Thomas Huang.
\newblock {Slimmable Neural Networks}.
\newblock In {\em The International Conference on Learning Representations},
  pages 1--12, 2019.

\bibitem{dorefa:zhou:arxiv2016}
Shuchang Zhou, Yuxin Wu, Zekun Ni, Xinyu Zhou, He Wen, and Yuheng Zou.
\newblock Dorefa-net: {Training} low bitwidth convolutional neural networks
  with low bitwidth gradients.
\newblock {\em arXiv preprint arXiv:1606.06160}, 2016.

\end{thebibliography}
}

\clearpage
\appendix

\section{Training Details}

\noindent\textbf{Datasets.} The statistics of the recognition benchmarks used in our paper are shown in Table~\ref{tab:dataset-overview}.

\begin{table}[!h]
    \caption{Statistics of the recognition benchmarks used in the paper.}	\label{tab:dataset-overview}
	\centering
	\footnotesize
	\renewcommand{\multirowsetup}{\centering}
	\begin{tabular}{l|c|c|c}
			\hline
			Datasets & \# Category & \# Training & \# Testing \\
			\hline 
			CIFAR-10 &10 & 50,000 & 10,000\\
			CIFAR-100 &100 & 50,000 & 10,000\\
			Flowers &102& 2,040&6,149\\
			Pets &\pt37& 3,680&3,669\\
			DTD &\pt47& 3,760&1,880\\
			\hline
	\end{tabular}
\end{table} 

\noindent\textbf{Training details for SSL methods.} Training details for SimCLR, BYOL, SimSiam and our US3L on CIFAR-100 are shown in Table~\ref{tab:appendix-details}.

\begin{table}[!h]
    \caption{Training details for SimCLR, BYOL, SimSiam and our US3L on CIFAR-100 in Table~\ref{tab:main-cifar100}, Table~\ref{tab:ablation} and Table~\ref{tab:ablation1}. $\tau$ denotes the temperature parameter, and $m$ denotes the momentum coefficient for the momentum network.}
	\label{tab:appendix-details}
	\centering
	\small
	\setlength{\tabcolsep}{1.5pt}
	\begin{tabular}{l|c|c|c|c|c|c|c|c|c}
		\hline
		\multirow{2}{*}{Method}  & \multicolumn{9}{c}{Settings}  \\
		\cline{2-10}
		&bs&lr&wd&epochs&optimizer&lr sche.&$\tau$&$m$&dim\\
		\hline
		SimSiam & 512 & 0.1 &5e-4 &400&SGD & cosine & -  & - & 2048 \\
		SimCLR   &512&0.5& 1e-4 & 400&SGD& cosine&0.5&-&2048\\ 
		BYOL   &512&0.1& 5e-4 & 400&SGD& cosine&-&0.99&2048 \\ 
		Ours & 512 & 0.5 & 1e-4 &400& SGD& cosine & 0.5 & 0.99 & 2048 \\
 		\hline
	\end{tabular}
\end{table}

\noindent\textbf{Training details for linear evaluation and fine-tuning.} For ImageNet linear evaluation, we follow the same settings in \cite{simsiam:kaiming:cvpr2021}. For linear evaluation on other datasets, we train for 100 epochs with lr initialized to 30.0, which is divided by 10 at the 60-th and 80-th epoch. 

\noindent\textbf{Source codes.} We promise that all codes will be made publicly available upon acceptance of the paper.

\section{More Results}
\subsection{Transfer Results for ResNet-18}
We plot the downstream object detection performance on Pascal VOC07\&12 for ResNet-18 FPN in Fig.~\ref{fig:voc-r18}. Moreover, we present the transfer results on downstream recognition benchmarks for ResNet-18 in Table~\ref{tab:classification-r18}. The results show that our method is also effective for ResNet-18 when transferring to downstream object detection and recognition tasks.

\begin{figure}[t]
    \centering
    \includegraphics[width=0.9\columnwidth]{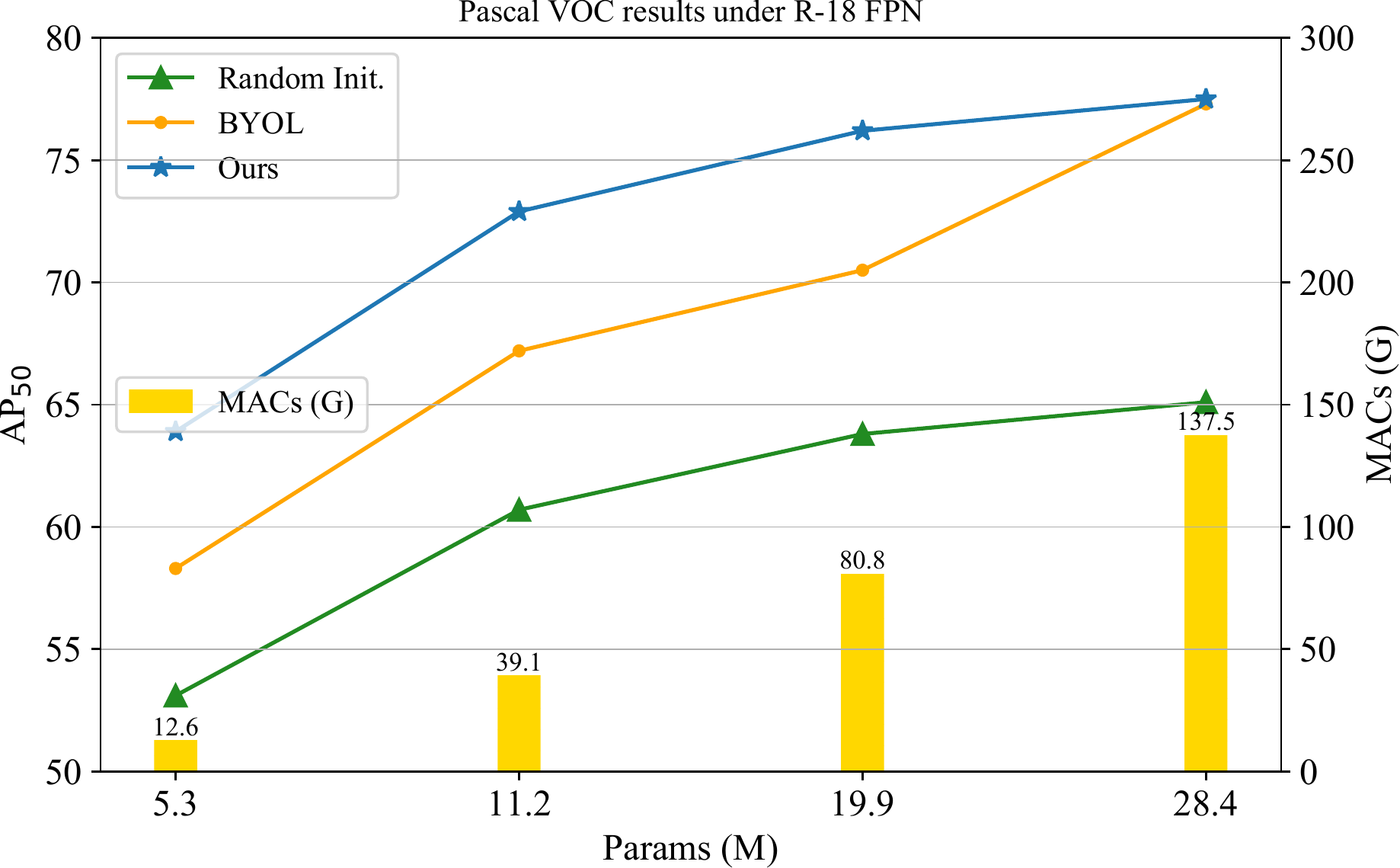}
    \caption{Transfer results on Pascal VOC 07\&12 under R18-FPN.}
    \label{fig:voc-r18}
\end{figure}

\begin{table}[t]
	\centering
	\footnotesize
	\caption{Transfer results on recognition benchmarks under linear evaluation. `C-10/100' denotes `CIFAR-10/100'.}
	\label{tab:classification-r18}
	\setlength{\tabcolsep}{1pt}
	\renewcommand{\arraystretch}{0.75}
	\renewcommand{\multirowsetup}{\centering}
	\begin{tabular}{l|c|c|c|c|c|c|c|c|c}
		\hline
		\multirow{2}{*}{Net}& \multirow{2}{*}{Width}& \multirow{2}{*}{Params}& \multirow{2}{*}{MACS}& \multirow{2}{*}{Method} & \multicolumn{5}{c}{Linear Accuracy (\%)} \\
		\cline{6-10}
		& &   & &&C-10 & C-100 & Flowers & Pets & Dtd\\
		\hline
        \multirow{8}{*}{R-18}
		&\multirow{2}{*}{1.0x}  &\multirow{2}{*}{11.69M}   &\multirow{2}{*}{1.82G}&BYOL & 76.6  &48.6 & 83.0 &  71.1 & 64.8 \\
		&&&&Ours &\textbf{77.9}&\textbf{52.6}  &\textbf{84.9}  &  \textbf{71.2} & \textbf{65.2} \\
		\cline{2-10}
		&\multirow{2}{*}{0.75x}  &\multirow{2}{*}{\phantom{0}6.68M}   &\multirow{2}{*}{1.05G}&BYOL &  76.4 & 48.1 &82.6 & \textbf{71.0} & \textbf{63.7} \\
		&&&&Ours & \textbf{76.7}   & \textbf{49.0}  & \textbf{83.5}  & \textbf{71.0} & \textbf{63.7} \\
		\cline{2-10}
		&\multirow{2}{*}{0.5x}  &\multirow{2}{*}{\phantom{0}3.06M}   &\multirow{2}{*}{0.49G}&BYOL & 74.6 & 46.9 &81.4&\textbf{67.6} & 61.7 \\
		& &&&Ours&\textbf{75.2} &\textbf{47.4} &\textbf{82.0} & 67.3 & \textbf{62.6} \\
		\cline{2-10}
    	&\multirow{2}{*}{0.25x}  &\multirow{2}{*}{\phantom{0}0.83M}   &\multirow{2}{*}{0.14G}&BYOL & 67.0 &41.2 &75.5&57.6&56.4 \\
		&&&&Ours & \textbf{67.8} & \textbf{41.4} &\textbf{77.0} & \textbf{60.6} & \textbf{56.6}\\
        \hline
	\end{tabular}
\end{table}

\subsection{Ablation Studies of Dynamic Sampling}
In this subsection, we conduct ablation studies of our dynamic sampling strategy and show how we successfully reduced the sampling number $s$ from 4 to 3 by using dynamic sampling while improving accuracy. We also compute the expected total forward number for $T$ iterations. Take our dynamic sampling as an example, we train the largest model only in the first $\frac{T}{4}$ iterations and sample three sub-networks in the last $\frac{3T}{4}$ iterations, hence the expected total forward number is:
\begin{equation}
    1\times \frac{T}{4} + 3\times \frac{3T}{4} = 2.5T \,.
\end{equation}
As shown in Table~\ref{tab:app-ablation-sample}, our dynamic sampling strategy achieves the best accuracy-efficiency trade-off. Notice that we also investigate the two components in our dynamic sampling strategy separately and we can clearly see that `max first' reduces the training overhead (case 4) and `gradually reduce' improves the accuracy (case 3).

\begin{table*}[htbp]
	\caption{Ablation studies of the sampling strategy under ResNet-18 on CIFAR-100. $T$ denotes the total number of iterations. `Max first' denotes whether to train the largest network only in early epochs. `Gradually reduce' denotes whether to gradually reduce the width of the smallest network.}
	\label{tab:app-ablation-sample}
	\centering
	\setlength{\tabcolsep}{1.5pt}
	\renewcommand{\multirowsetup}{\centering}
	\begin{tabular}{c|c|c|c|c|c|c|c|c|c|c|c|c|c}
		\hline
		\multirow{2}{*}{Case}&Sampling number& Expected forward & \multicolumn{2}{c|}{Dynamic Sampling}& \multicolumn{9}{c}{Linear Accuracy (\%)} \\
		\cline{4-14}
		&$s$&number & Max first & Gradually reduce  &1.0x&0.9x&0.8x&0.7x&0.6x&0.5x&0.4x&0.3x&0.25x\\
		\hline
		1& 4 & 4$T$ & $\times$ & $\times$ & 68.1 & 67.4 &67.0 & 66.3 &  65.3 & 64.4 & 62.7 & 60.8 & 59.9 \\
		2& 3 & 3$T$ & $\times$ & $\times$ & 67.7 & 67.2 & 66.5 & 66.0 & 65.1 & 64.3 & 62.5 & 60.5 & 59.6 \\
		3& 3 & 3$T$ &  $\times$ & \checkmark & 68.5 & 67.9 & \textbf{67.2} & 66.4 & 65.3 & 64.5 &  62.6 & \textbf{61.0} & \textbf{60.2} \\
		4& 3 & \textbf{2.5$T$} & \checkmark & $\times$ & 67.8 & 67.6 & 66.8 & 66.2 & 65.3 & 64.5 & 63.0 & 60.6 & 59.8 \\
		5& 3 & \textbf{2.5$T$}  & \checkmark & \checkmark & \textbf{68.6}&\textbf{68.1}&\textbf{67.2}&\textbf{66.6}&\textbf{65.5}&\textbf{64.6}&\textbf{62.8}&60.7&59.9 \\
		\hline
	\end{tabular}
\end{table*}

\subsection{Hyper-parameter Studies of $G$ and $\alpha$}
In this subsection, we study the choice of hyper-parameters $G$ and $\alpha$ in our group regularization (Eq.~\eqref{eq:greg_lambda}) in Table~\ref{tab:app-ablation-reg}. Notice that when $\alpha=0$, group regularization is equivalent to the standard $L_2$ normalization (case 1). We used $G=8$ and $\alpha=0.05$ in the paper. We also present the max decay fraction $G\times{\alpha}$ (i.e., the decay rate for the last group). Table~\ref{tab:app-ablation-reg} shows that we can achieve the best results when $G\times \alpha=40\%$ (case 1 $\sim$ case 5). Then we keep $G\times \alpha=40\%$ and change $G$ to 4 or 16. In terms of hyper-parameter $G$, we can see that $G=8$ (case 3) outperforms $G=4$ (case 6) and accuracy is saturated and will not continue to increase beyond 8 ($G=16$, case 7). It is worth noting that if we set $\alpha$ to a negative value which goes against our motivation (case 9), we will no longer see performance gains for large sub-networks as before. The results here further validate the effectiveness of our method and the correctness of our analysis in the paper.

\begin{table*}[t]
	\caption{Hyper-parameter studies of $G$ and $\alpha$ in group regularization under ResNet-18 on CIFAR-100.}
	\label{tab:app-ablation-reg}
	\centering
	\setlength{\tabcolsep}{2.5pt}
	\renewcommand{\multirowsetup}{\centering}
	\begin{tabular}{c|c|c|c|c|c|c|c|c|c|c|c|c|c|c}
		\hline
		\multirow{2}{*}{Case}& Max decay fraction&Group number & Decay rate & \multicolumn{11}{c}{Linear Accuracy (\%)} \\
		\cline{5-15}
		&$G\times \alpha$ &$G$ & $\alpha$  &1.0x&0.9x&0.8x&0.7x&0.6x&0.5x&0.4x&0.3x&0.25x& Avg. & 1.0x diff \\
		\hline
		1 & 0\% & - & 0  & 67.7 & 67.2 & 66.5 & 66.0 & 65.1 & 64.3 & 62.5 & 60.5 & 59.6 & 64.4 & +0.0\%\\
		2 & 20\% & 8 & 0.025 & 68.0 & 67.4 & 66.3& 66.0 & 65.2 & 64.0 & 62.6 &  61.0 & 60.3 & 64.5 & +0.3\% \\
 		3 & 40\%  & 8 & 0.05 & 68.6 & 67.8 & 67.3 & 66.4 & 65.5 & 64.4 & 63.1 & 60.9 & 60.1 & \textbf{64.9} & \textbf{+0.9\%} \\
		4 & 60\% &8 & 0.075  & 68.2 & 67.6 & 66.9 & 66.1 & 65.3 & 64.2 & 62.7 & 61.0 & 60.2 & 64.7 & +0.5\%\\
		5 & 80\% & 8 & 0.1  & 67.7 & 67.0 & 66.7 & 66.4 & 65.8 & 64.2 & 62.8 & 61.3 & 60.5 & 64.7 &  +0.0\%\\
		\hline
		6 & 40\% & 4 & 0.1  &  68.0 & 67.4 & 66.6 & 66.1 &  65.1 & 63.8 & 63.0 & 61.6 & 60.6 & 64.7& +0.3\%  \\
		7 & 40\% & 16 & 0.025 & 68.7 & 67.6 & 67.3 & 66.5 &  65.5 & 64.7 & 62.8 & 61.3 & 60.1 & \textbf{64.9} & \textbf{+1.0\%} \\
		8 & 80\% & 16 & 0.05  & 67.9 & 67.3 & 66.7 & 66.3 &  65.5 & 64.2 & 63.1 & 61.1 & 60.5 & 64.7 & +0.2\% \\
		\hline
		9 & -40\% & 8 & -0.05  & 67.4 & 67.0 & 66.3 & 65.9 & 64.7 & 64.0 & 62.8 & 61.2 & 60.1 & 64.4 & -0.3\% \\
		\hline
	\end{tabular}
\end{table*}

\begin{table*}[t]
	\caption{Effect of group regularization in BYOL and SimCLR under ResNet-18 on CIFAR-100. Our group regularization strategy is tailored for US-Net.}
	\label{tab:app-ablation-reg-for-byol}
	\centering
	\setlength{\tabcolsep}{2.5pt}
	\renewcommand{\multirowsetup}{\centering}
	\begin{tabular}{c|c|c|c|c|c|c|c|c|c|c|c|c}
		\hline
		\multirow{2}{*}{Method} &Model & Once&Group& \multicolumn{9}{c}{Linear Accuracy (\%)} \\
		\cline{5-13}
		&  Type& Training & Regularization &1.0x&0.9x&0.8x&0.7x&0.6x&0.5x&0.4x&0.3x&0.25x\\
		\hline
		\multirow{2}{*}{BYOL} & \multirow{2}{*}{individual} & \multirow{2}{*}{$\times$} & $\times$&66.8&	66.0	&65.6&	65.3&	63.0&	62.1	& 59.5 & 	56.0	&54.3 \\
		&&&\checkmark & 66.0 &	66.2&	65.6&	64.4	&63.4&	61.8&	59.3	&56.1&	54.0 \\
		\hline
		\multirow{2}{*}{SimCLR} & \multirow{2}{*}{individual} & \multirow{2}{*}{$\times$} & $\times$ & 66.5	&65.4&	64.7&	63.7	&62.6&	61.0&	59.0&	56.1&	53.6\\
		&&&\checkmark & 65.7&	65.0&	65.0&	63.3&	62.6&61.0 &	58.8	&56.0&	53.4\\
		\hline
		\multirow{2}{*}{Ours} & \multirow{2}{*}{US-Net} & \multirow{2}{*}{\checkmark} & $\times$ & 67.7 & 67.2 & 66.5 & 66.0 & 65.1 & 64.3 & 62.5 & 60.5 & 59.6\\
		&&&\checkmark&\textbf{68.6} & \textbf{67.8} & \textbf{67.3} & \textbf{66.4} & \textbf{65.5} & \textbf{64.4} & \textbf{63.1} & \textbf{60.9} & \textbf{60.1} \\
		\hline
	\end{tabular}
\end{table*}

\subsection{Group Regularization is Tailored for US-Net}
In this subsection, we will demonstrate that our group regularization strategy is tailored for US-Net and our analysis in the paper is valid. We apply group regularization to common SSL methods which train each model individually. As shown in Table~\ref{tab:app-ablation-reg-for-byol}, the introduction of group regularization does not bring improvements for BYOL and SimCLR, which are individually trained. It shows that the group regularization is tailored for US-Net, and the improvement is due to our unique design rather than factors such as hyper-parameters.

\subsection{Our US3L Can Run at Arbitrary Width}
Note that we only reported the results of width at [1.0, 0.9, 0.8, 0.7, 0.6, 0.5, 0.4, 0.3, 0.25]x for CIFAR-100 and [1.0, 0.75, 0.5, 0.25]x for ImageNet due to limited space. Actually, the pretrained model of our US3L can run at any width within the predefined width range, by only training once. As a supplement, we present the results of more widths on CIFAR-100 in Table~\ref{tab:app-arbitrary-width} and we can see that our pretrained model can achieve a good accuracy-efficiency trade-off. 

\begin{table*}[t]
	\caption{Results of our US3L method at different widths under ResNet-18 and ResNet-50 on CIFAR-100. Our US3L can run at arbitrary width and we only reported partial results as a representative in the paper due to limited space.}
	\label{tab:app-arbitrary-width}
	\centering
	\setlength{\tabcolsep}{1.5pt}
	\renewcommand{\arraystretch}{1.5}
	\renewcommand{\multirowsetup}{\centering}
	\begin{tabular}{c|c|c|c|c|c|c|c|c|c|c|c|c|c|c|c|c|c|c}
	\hline
	\multirow{2}{*}{Method}  & \multirow{2}{*}{Backbone}  & \multicolumn{17}{c}{Linear Accuracy (\%)} \\
	\cline{3-19}
	&&1.0x&0.95x&0.9x&0.85x&0.8x&0.75x& 0.7x&0.65x&0.6x&0.55x&0.5x&0.45x&0.4x&0.35x&0.3x&0.275x&0.25x\\
	\hline
	\multirow{2}{*}{Ours (800ep)} & R-18 & 70.1& 69.6 & 69.3 & 69.2 & 69.0 & 68.4 &  68.7 &68.0 &  67.3 &66.7 & 66.4 &65.4 & 64.2 &63.6 & 63.1 &63.1& 62.3\\
	\cline{2-19}
	& R-50 & 73.0&72.9& 72.5 & 72.1&71.9&71.6&71.6&71.2&71.1&71.0&70.8&69.9&69.1&68.3&68.0&67.8& 67.6\\
	\hline
	\end{tabular}
\end{table*}

\subsection{Figures}
As a supplement to Table~\ref{tab:main-cifar100} in the paper, we plot the results here to more intuitively see the advantages of our method. We compare with individually trained methods in Fig.~\ref{fig:compare-individual} and compare with the US-Net baseline in Fig.~\ref{fig:compare-us}.

\begin{figure*}[t]
    \centering
    \subfloat[ResNet-18]{
        \label{fig:individual-r18}
        \includegraphics[width=0.47\linewidth]{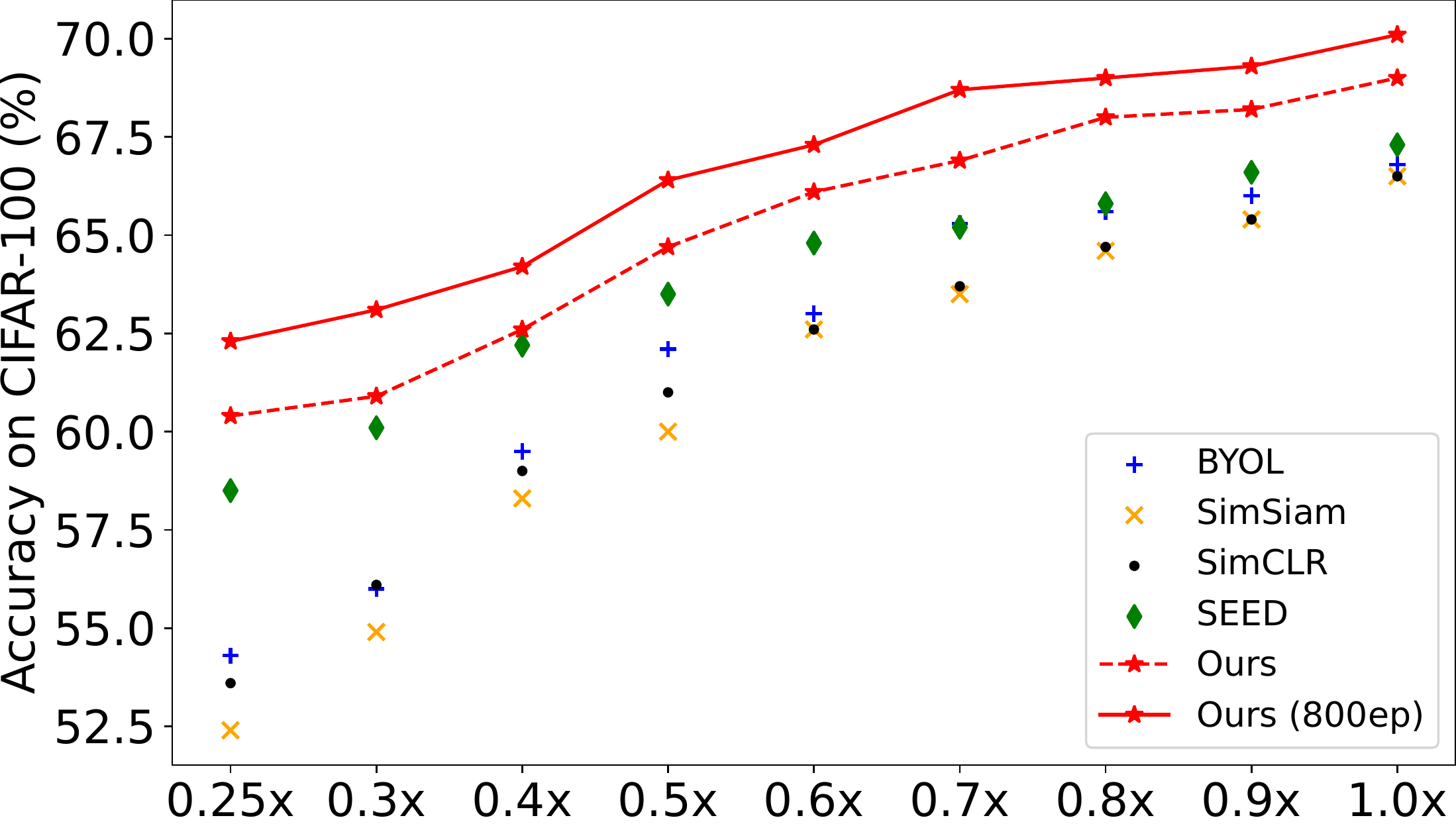}
	}
    \subfloat[ResNet-50]{
        \label{fig:individual-r50}
        \includegraphics[width=0.47\linewidth]{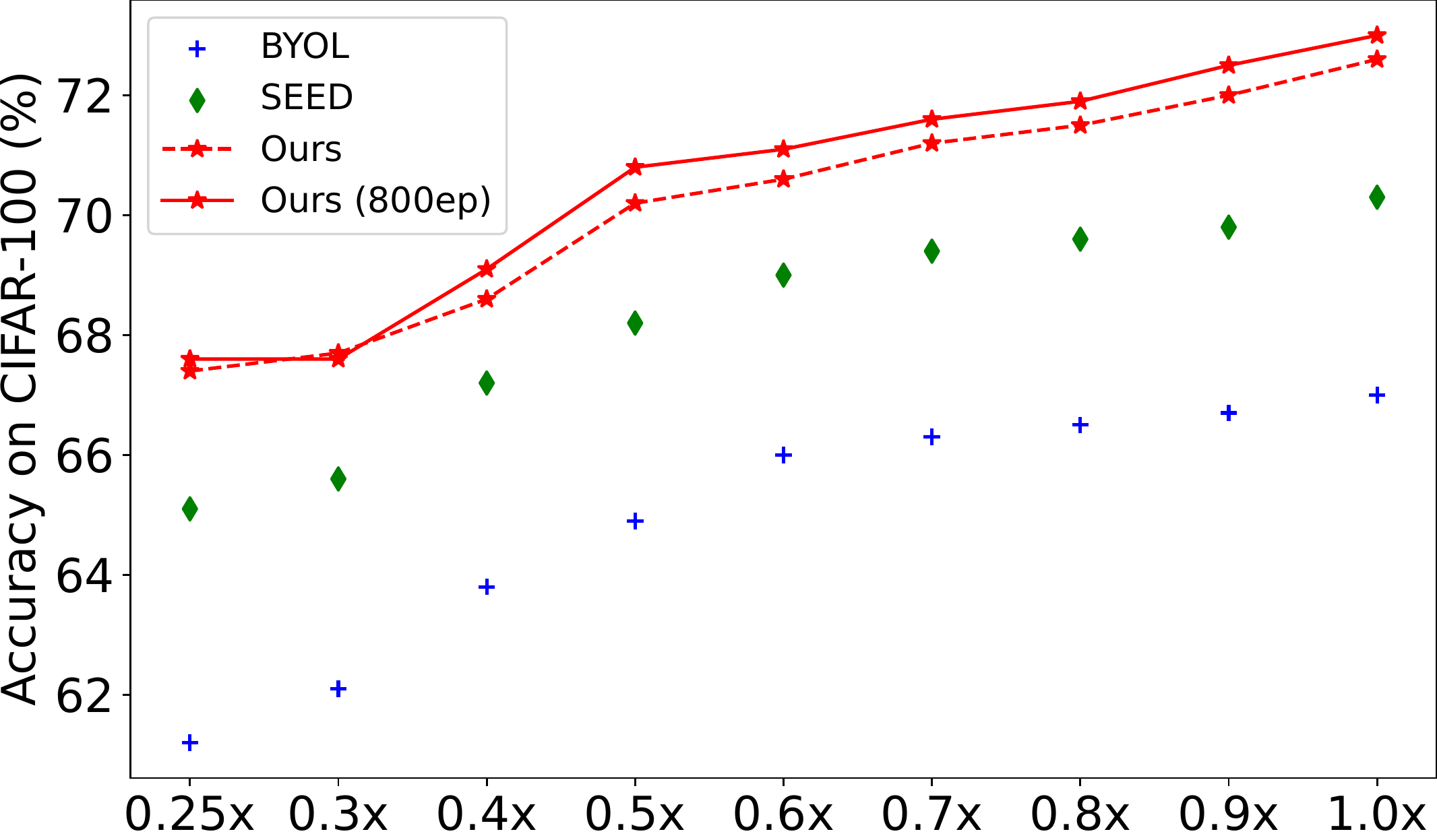}
	}
    \caption{Comparison with individually trained baselines on CIFAR-100. All scatters are individually trained, whereas our method is trained only once (the red line).}
    \label{fig:compare-individual}
\end{figure*}

\begin{figure*}[t]
    \centering
    \subfloat[ResNet-18]{
        \label{fig:us-r18}
        \includegraphics[width=0.47\linewidth]{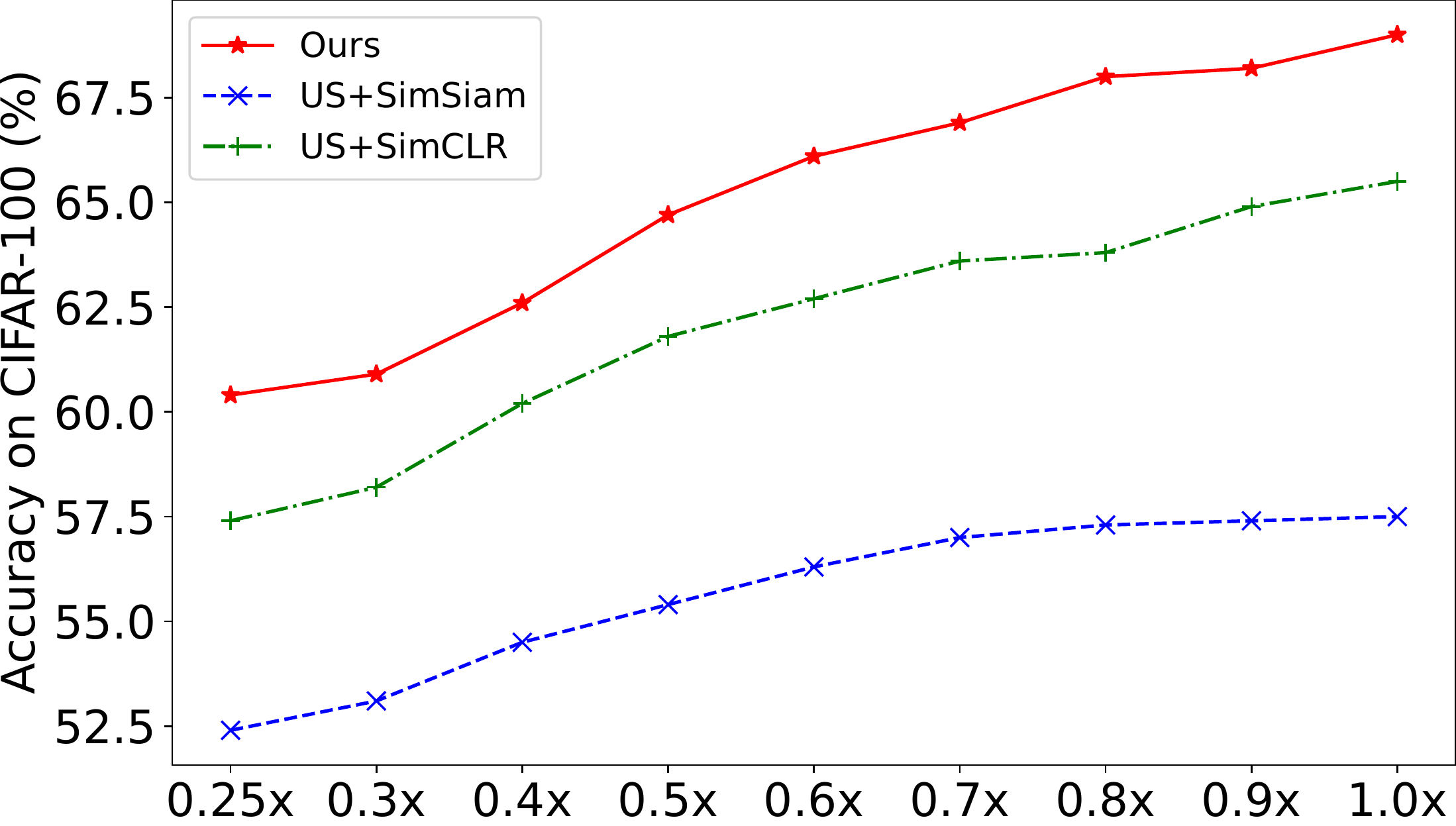}
	}
    \subfloat[ResNet-50]{
        \label{fig:us-r50}
        \includegraphics[width=0.47\linewidth]{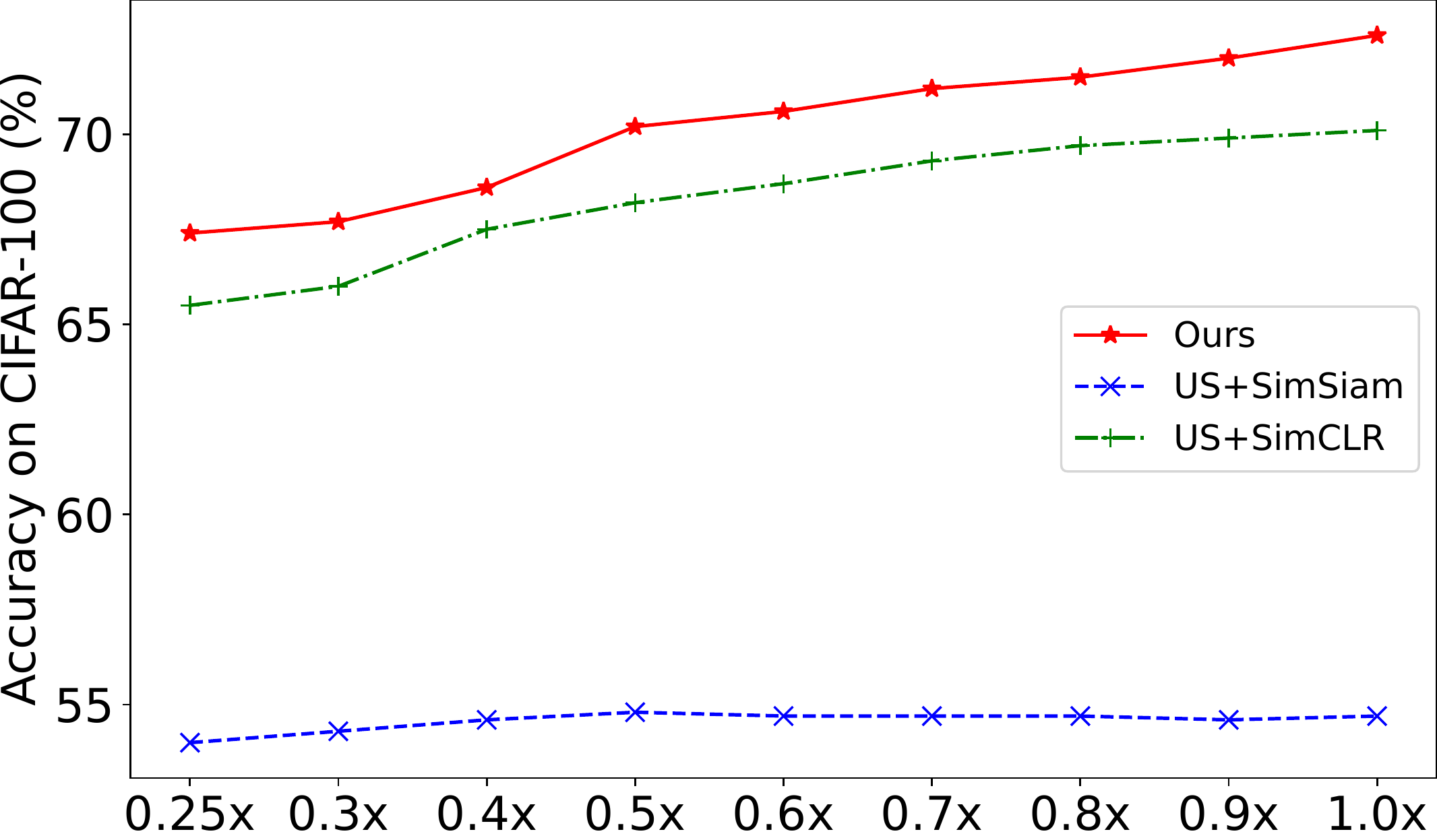}
	}
    \caption{Comparison with the original US-Net baseline on CIFAR-100. All are trained only once for 400 epochs.}
    \label{fig:compare-us}
\end{figure*}

\subsection{Ablation Studies of Loss Design}
We present more ablation results of loss design here in Table~\ref{tab:app-ablation}, as a supplement to Table~\ref{tab:ablation} in the paper. We look more closely at the `Asymmetric Distill Head' column, which indicates whether to use an additional head for distillation. Notice that there is already an asymmetrical head itself in MSE-based methods like SimSiam and BYOL. So `Share' refers to sharing the asymmetrical head, and `New' refers to distillation using a brand new head.

\begin{table*}[t]
	\caption{Ablation studies of the loss design under ResNet-18 on CIFAR-100. `-' denotes the model collapses.}
	\label{tab:app-ablation}
	\centering
	\small
	\setlength{\tabcolsep}{2.5pt}
	\renewcommand{\arraystretch}{0.7}
	\renewcommand{\multirowsetup}{\centering}
	\begin{tabular}{l|c|c|c|c|c|c|c|c|c|c|c|c|c|c}
		\hline
		\multirow{2}{*}{Base Loss}&\multirow{2}{*}{Case}&Distill& Asymmetric&\multicolumn{2}{c|}{Momentum Target} & \multicolumn{9}{c}{Linear Accuracy (\%)} \\
		\cline{5-15}
		&&Loss&Distill Head&Base model&Sub model&1.0x&0.9x&0.8x&0.7x&0.6x&0.5x&0.4x&0.3x&0.25x\\
		\hline
		\multirow{17}{*}{MSE}
		&1&$\times$&$\times$&$\times$&$\times$& - & - & - &-& -& -& -& -& - \\
		&2&MSE  &$\times$&$\times$&$\times$&- & - & - &-& -& -& -& -& -  \\
		&3&MSE  &\checkmark (Share)&$\times$&$\times$& 57.5&57.4&57.3&57.0&56.3&55.4&54.5&53.1&52.4  \\
		&4&MSE&\checkmark (Share)&$\times$&\checkmark&- & - & - &-& -& -& -& -& -  \\
		&5&MSE&$\times$&$\times$&\checkmark&- & - & - &-& -& -& -& -& -\\
		&6&MSE&$\times$&\checkmark&\checkmark&- & - & - &-& -& -& -& -& - \\
		&7&MSE  &\checkmark (Share)&\checkmark&\checkmark&    64.7&64.7&64.5&64.3&63.9&62.6&61.3&59.7&59.3 \\
		&8&MSE  &\checkmark (New)&\checkmark&\checkmark&  65.4 & 65.0 & 64.8 & 64.5 & 63.8 & 62.7 & 61.1 & 59.8 & 58.9\\
		\cline{2-15}
		&9&InfoNCE&$\times$&$\times$&$\times$& 62.3&62.3&62.3&62.2&61.8&60.6&58.9&57.6&57.2 \\
		&10&InfoNCE&\checkmark (Share)&$\times$&$\times$&58.7&58.8& 58.8&58.9&58.7&58.4&56.8&55.3&54.3 \\
		&11&InfoNCE&\checkmark (New)&$\times$&$\times$& 61.5 & 61.4 & 61.6 &  61.6 & 61.1 & 60.3 & 58.7 &  57.1 & 56.3 \\
		&12&InfoNCE&$\times$&$\times$&\checkmark& 63.7&63.8&63.7&63.6&63.1&62.0&60.6&59.3&58.2 \\ 
		&13&InfoNCE&\checkmark (Share)&$\times$&\checkmark&- & - & - &-& -& -& -& -& - \\
		&14&InfoNCE&\checkmark (New)&$\times$&\checkmark&64.5 & 64.5 & 64.6 & 64.5 & 64.2 & 63.2 & 62.1& 60.0&59.1  \\
		&15&InfoNCE&$\times$&\checkmark&\checkmark&     65.0&65.0&65.1&65.0&64.5&62.7&61.3&59.8&59.2 \\
		&16&InfoNCE&\checkmark (Share)&\checkmark&\checkmark&   65.0 & 64.9 & 64.9 & 64.4 & 64.1 & 62.8 & 61.1 & 60.0 & 59.5 \\
		&17&InfoNCE&\checkmark (New)&\checkmark&\checkmark&  \textbf{65.5} &\textbf{65.5} &\textbf{65.6}& \textbf{65.0}&\textbf{64.6}&\textbf{63.2}&\textbf{61.6}&\textbf{60.2} & \textbf{59.7}\\
		\hline
		\multirow{11}{*}{InfoNCE}
		&18&$\times$&$\times$&$\times$&$\times$& 64.8 & 64.0 & 63.2 & 62.0 & 60.8 & 59.8	& 57.4 & 55.1 & 54.2 \\
		&19&MSE&$\times$&$\times$&$\times$& 65.0 &	64.4 & 63.1	& 62.3	& 61.9	& 60.3	& 58.3	& 57.1	& 56.6 \\
		&20&MSE&$\times$&$\times$&\checkmark& 65.8 &	65.0&	64.4&	63.4	&62.7	&61.8&	59.8&	58.5&	57.6 \\
		&21&MSE&\checkmark&$\times$&\checkmark& 66.7 &	66.0	& 65.6	& 64.5&	63.3&	62.0	& 60.8	& 59.3	& 58.2 \\
		&22&MSE&$\times$&\checkmark&\checkmark&  66.9&	66.3&	65.7&	64.9	&63.8&62.9&	61.6&	59.5&	59.1 \\
		&23&MSE&\checkmark&\checkmark&\checkmark& \textbf{67.7}	& \textbf{67.2} &\textbf{66.5}&\textbf{66.0}	& \textbf{65.1} & 	\textbf{64.3} &	\textbf{62.5} &\textbf{60.5}&\textbf{59.6} \\
		\cline{2-15}
		&24&InfoNCE&$\times$&$\times$&$\times$& 65.5&	64.9&	63.8&	63.6&	62.7	&61.8&	60.2&	58.2&	57.4 \\
		&25&InfoNCE&$\times$&$\times$&\checkmark& 64.7&64.5	&64.0	& 63.6	&62.3	&61.4&	59.8&	58.4&	57.9  \\
		&26&InfoNCE&\checkmark&$\times$&\checkmark&  66.1&66.0&65.4&64.4&63.4&62.3&60.8&59.1&58.6 \\
		&27&InfoNCE&$\times$&\checkmark&\checkmark&  66.0 &	65.4	& 64.8	& 64.3	& 63.8	& 62.4	& 61.1	& 59.8	& 58.7  \\
		&28&InfoNCE&\checkmark&\checkmark&\checkmark& \textbf{67.4} &	\textbf{66.0}&	\textbf{66.1}&	\textbf{65.6} &	\textbf{64.7} &	\textbf{64.0} &	\textbf{62.2} &	\textbf{60.2} &	\textbf{59.5} \\
        \hline
	\end{tabular}
\end{table*}

\section{More Analysis}

\begin{lemma}
\label{lemma:n=3}
    $s=3$ is the theoretical minimum number of samples for US-Net~\cite{universal-slimmable:yu:ICCV19}.
\end{lemma}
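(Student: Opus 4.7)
The plan is to argue that each of the three sampled widths plays a distinct and non-redundant role during a single training iteration, so that removing any one of them breaks the US-Net training objective. I would organize the argument around the three roles: (i) the largest width supplies the distillation teacher, (ii) the smallest width anchors the lower end of the continuous width range, and (iii) at least one intermediate random width is needed to actually cover the interior of the range during training.

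First I would make precise what ``works'' means for US-Net: every width in $[R_{\min}, R_{\max}]$ must receive a non-vanishing expected gradient signal across iterations, and the guidance must be temporally consistent (in the sense of Sec.~\ref{sec:discussion}). I would then observe that the base/distillation framework of Eq.~\eqref{eq:full-loss} requires the largest network to be forwarded on every iteration, both because it produces $L_{\text{Base}}$ and because its output $z^m$ (via the momentum encoder) is the common target aligning all sub-networks; by Lemma~\ref{lemma3} this is precisely what yields temporal consistency. Hence one of the $s$ samples must be the largest, giving the first mandatory sample.

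Next I would argue the smallest width must be included. The smallest sub-network has the least capacity and is the hardest to optimize; if only the largest and some random intermediate widths are sampled, the smallest width is visited with probability zero under a continuous random sampler on $[R_{\min}, R_{\max}]$, so its parameters (which are shared with all larger widths, but with the tightest capacity constraint) never receive a loss tailored to them. This gives the second mandatory sample. Finally, sampling only the two endpoints would leave the continuum of intermediate widths untrained, and in particular the sub-networks whose weights govern generalization across the width range would never appear in the loss; at least one random interior width is therefore required, giving the third sample. Combining these three necessary conditions shows $s \ge 3$, and since Algorithm~\ref{algorithm} achieves universal slimmability with exactly $s=3$ (largest, smallest, one random), the bound is tight.

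The main obstacle I anticipate is justifying the necessity of the smallest width rigorously rather than only intuitively: one must rule out that a pure ``largest $+$ random'' scheme could, in expectation, adequately cover $R_{\min}$ as a limit point. I would address this by noting that the random sampler has zero probability mass on any single width (or on a measure-zero boundary point), so $R_{\min}$ would be trained with probability zero per iteration, whereas the deployment requirement explicitly demands bounded accuracy loss at $R_{\min}$; an explicit inclusion of $R_{\min}$ in the sample set is therefore unavoidable. The remaining parts are essentially bookkeeping about which role each sample plays in Eq.~\eqref{eq:full-loss}.
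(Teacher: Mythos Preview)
Your proposal reaches the same conclusion and identifies the same three roles (largest, smallest, random), but the justification you give for the two endpoint samples differs from the paper's. The paper's argument is much shorter: it simply invokes the \emph{sandwich rule} from~\cite{universal-slimmable:yu:ICCV19}, namely that performance at every width is bounded above and below by performance at the largest and smallest widths, so optimizing both endpoints implicitly optimizes all intermediate widths; given this, one additional randomly sampled width per iteration is needed to make the training actually cover arbitrary widths, hence $s=3$.

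By contrast, you justify the largest sample via its role as the distillation teacher and source of temporally consistent targets (Lemma~\ref{lemma3}, Eq.~\eqref{eq:full-loss}), and the smallest sample via a measure-zero argument against a continuous random sampler. These are reasonable within the specific loss design of this paper, but they are more elaborate than what the lemma requires and somewhat fragile: your measure-zero argument for $R_{\min}$ depends on the sampler being continuous with no atom at the boundary, which is an implementation choice rather than a structural necessity, and your argument that ``only endpoints leaves the continuum untrained'' is in mild tension with the sandwich rule itself (which says the endpoints \emph{do} bound the continuum). The paper sidesteps both issues by citing the sandwich rule directly for the endpoints and treating the random sample as the mechanism that realizes arbitrary-width coverage. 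Your version buys a more self-contained story tied to this paper's loss; the paper's version buys brevity and reuses an established fact from the US-Net literature.
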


\begin{proof}
    First, from \cite{universal-slimmable:yu:ICCV19} we know sandwich rules: Performances at all widths are bounded by the performance of the model at the smallest and the largest width. In other words, optimizing the lower and upper bounds of performance can implicitly optimize all sub-networks in a US-Net. To optimize for arbitrary widths, we need at least one randomly sampled width per iteration, except for the largest and smallest sub-networks. In conclusion, $s=3$ is the theoretical minimum number of samples for US-Net. 
\end{proof}

\end{document}